\documentclass{article}

\usepackage[preprint]{neurips_2026}

\usepackage[utf8]{inputenc} 
\usepackage[T1]{fontenc}    
\usepackage{hyperref}       
\usepackage{url}            
\usepackage{booktabs}       
\usepackage{amsfonts}       
\usepackage{nicefrac}       
\usepackage{microtype}      
\usepackage{xcolor}         

\usepackage{multirow}
\usepackage{graphicx}
\usepackage{algorithm}
\usepackage{algpseudocode}
\usepackage{amsmath, amssymb}
\usepackage{amsthm}
\usepackage{cleveref}
\usepackage{math}
\usepackage{bm}
\usepackage{mathtools}
\usepackage{tikz}

\usepackage{enumitem}
\usetikzlibrary{positioning, fit, backgrounds, arrows.meta, shadows, calc}
\newtheorem{lemma}{Lemma}

\newtheorem{theorem}[lemma]{Theorem}

\title{Valid and Expressive Copulas for Irregular Multivariate Time Series }

\author{%
  Christian Klötergens \\
  Institute of Computer Science\\
  University of Hildesheim\\
  Hildesheim, Germany \\
  \texttt{kloetergens@ismll.de} \\
  \And{}
  Tom Hanika \\
  Institute of Computer Science\\
  University of Hildesheim\\
  Hildesheim, Germany \\
  \AND{}
  Lars Schmidt-Thieme \\
  Institute of Computer Science\\
  University of Hildesheim\\
  Hildesheim, Germany \\
  \And{}
  Vijaya Krishna Yalavarthi \\
  Institute of Computer Science\\
  University of Hildesheim\\
  Hildesheim, Germany \\
}

\begin{document}

\maketitle

\begin{abstract}
We introduce \model{}, a copula model for probabilistic forecasting of irregular multivariate time series (IMTS).
Our model combines the expressivity of normalizing flows for univariate marginals with the consistency and flexibility of a Gaussian Mixture Copula for the joint dependency structure. 
Our experiments show that copula-based approaches, which decouple the marginals from the joint, yield better marginal models than architectures that directly fit the full joint. 
With \model{} we propose the first IMTS copula that is \emph{marginalization consistent} by construction, and establish a new state of the art in joint IMTS density modeling.\footnote{The code can be found here: \url{https://anonymous.4open.science/r/CoPFITi-81E1}}
\end{abstract}

\section{Introduction}

Sparse and irregularly sampled multivariate time series (IMTS) arise in many real-world domains, including healthcare, climate science, and sensor networks, where variables are observed asynchronously and at non-uniform timestamps.
In these settings, predicting future values is inherently uncertain due to noise, missing observations, and partial observability of the underlying system.
As a result, for many decision making tasks, accurate forecasting requires not just point predictions but full probabilistic forecasts.
Crucially, these forecasts must capture the joint distribution over all queried variables and time points.
Dependencies across channels and time carry important information, and modeling them independently as marginal distributions leads to incoherent and possibly misleading predictions.

The irregular structure of IMTS introduces an additional challenge: predictions must be marginalization consistent~\citep{Yalavarthi2025.Reliable}.
Since each query may involve a different subset of variables and timestamps, the model must ensure that predictions over any subset agree with those obtained from larger joint predictions.
This requirement is difficult to satisfy unless the model admits tractable and well-defined marginal distributions.

Copula models provide a natural way to address this challenge by separating univariate marginals from the multivariate dependency structure.
This decomposition can guarantee marginalization consistency when it is properly constructed. It also enables specialization of both components.
Marginals can be learned independently of the dependency model.
This often leads to more accurate univariate predictions than joint training.
At the same time, the dependency model can focus entirely on capturing interactions between variables, without needing to model marginal behavior.
Currently, TACTiS~\citep{Drouin2022.TACTiS} and TACTiS-2~\citep{Ashok2023.TACTiS2} are the only copula-based approaches suitable for IMTS.\@ However, they fail to strictly preserve the univariate marginals, which violates a defining property of copulas~\citep{Nelsen2006.Introduction}.

We propose \model{} (\textbf{Co}pulas for \textbf{P}robabilistic \textbf{F}orecasting of \textbf{I}rregular \textbf{Ti}me Series), a copula model tailored to IMTS.\@
Our approach constructs the dependency structure in a latent space using a Gaussian mixture model, while fully decoupling it from independently trained univariate marginals.
This design ensures marginalization consistency by construction while remaining flexible and expressive.
Our contributions are as follows:
\begin{enumerate}[leftmargin=*]
	\item We introduce \model{}, to the best of our knowledge, the first copula-based framework for IMTS.\@ \model{} constructs the dependency structure in a latent space using a Gaussian mixture model, while fully decoupling it from the marginal distributions.
    \item We propose \margmodel{}, a simple yet strong model for univariate marginal distributions using Deep Sigmoidal Flows. In our evaluation on established benchmark tasks on four datasets, we show that \margmodel{} achieves the best marginal likelihood.
	\item \model{} attains joint likelihoods that are significantly better than non-copula baselines and are on par with or often better than TACTiS-2. In contrast to TACTiS-2, \model{} is consistent.
\end{enumerate}
\begin{figure}
    \centering
    \includegraphics[width=0.95\textwidth]{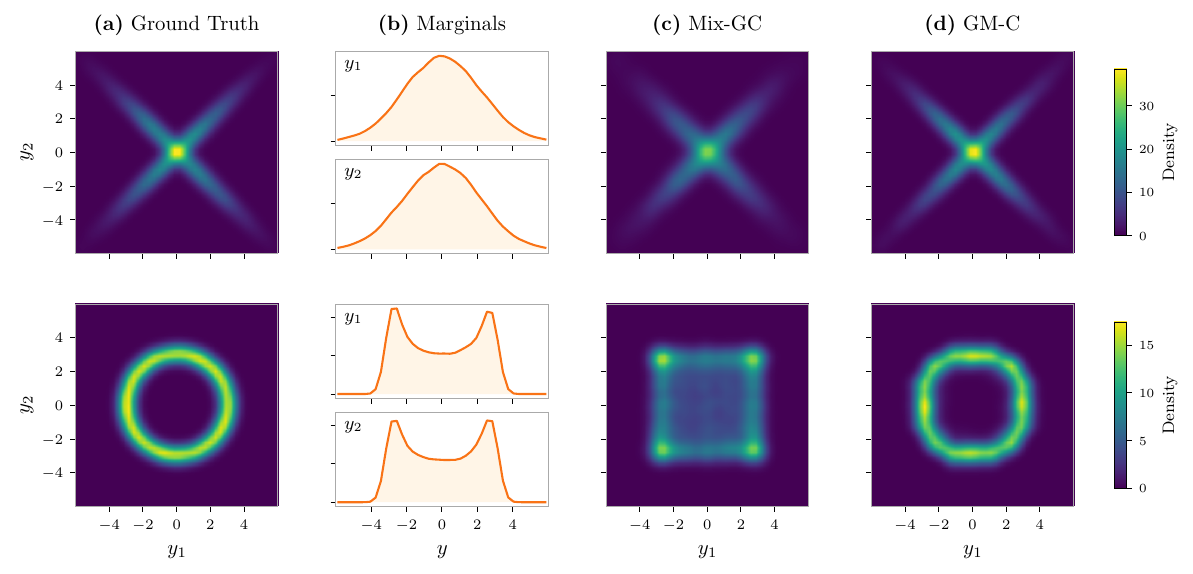}
    \caption{Demonstrating the advantage of a Gaussian Mixture Copula (GM-C) over a Mixture of Gaussian Copulas (Mix-GC). \textbf{(b)} shows the univariate marginal density functions of the 2-D distribution shown in \textbf{(a)}. \textbf{(c)} and \textbf{(d)} depict attempts to model the distribution with Mix-GC and GM-C using the respective marginal distributions. For both models and datasets we used mixtures of 5 components.}~\label{fig:XO}
\end{figure}
\section{Background}
\subsection{Copulas}\label{sec:gmc}
Throughout, uppercase letters denote cumulative distribution functions (CDFs) and the corresponding lowercase letters denote their densities.
A Copula is a CDF with domain ${[0,1]}^{N}$ and uniform marginals.
\emph{Sklar’s theorem}~\citep{Sklar1959.Fonctions,Nelsen2006.Introduction} states that any joint distribution $F$ with marginals $F_n$ admits a copula $C$ such that:
\begin{equation}
	F(\mathbf{y}) = C\big(F_1(y_1), \dots, F_N(y_N)\big).
\end{equation}
Intuitively, copulas allow separating the dependency structure from the univariate marginal distributions.
If the marginals are continuous, $C$ is unique.
If $F$ is continuous with density $f$ and copula density $c$, then:
\begin{equation}
	f(\mathbf{y}) = c\big(F_1(y_1), \dots, F_N(y_N)\big)\prod_{n=1}^N f_n(y_n).
	\label{eq:sklar}
\end{equation}
\textbf{Copulas from an auxiliary density.}
Let $g$ be a continuously differentiable joint density with marginals $g_n$. Then $c_g$ denotes the density of the copula induced by $g$:
\begin{equation}
	c_g(\mathbf{u})
	=
	\frac{
		g\big(G_1^{-1}(u_1), \dots, G_N^{-1}(u_N)\big)
	}{
		\prod_{n=1}^N g_n\!\big(G_n^{-1}(u_n)\big)
	},
	\qquad \mathbf{u}\in{[0,1]}^{N}. \label{eq:copula}
\end{equation}
The corresponding copula depends only on the dependence structure of $g$, since its marginals are removed by the probability integral transform.

A well-established instance of this construction is the \emph{Gaussian copula}, where $g$ is a multivariate Gaussian with standard normal marginals, i.e.\ zero mean and unit variances, so that its covariance matrix coincides with a correlation matrix $\Sigma$. Standardizing the marginals is convenient: the dependence structure is then fully captured by $\Sigma$, and the marginal transformations $G_n$ and $G_n^{-1}$ reduce to the standard normal CDF and its inverse. However, the Gaussian copula can only represent the dependence structure of a multivariate Gaussian.

\paragraph{Why a mixture of Gaussian copulas is not enough.}
A natural attempt to increase flexibility is to take a \emph{mixture of
Gaussian copulas}, i.e.\ a convex combination
$\cop(\mathbf{u}) = \sum_{j=1}^{K} \pi_j\, \cop_{\Sigma_j}(\mathbf{u})$
of Gaussian copulas with
distinct correlation matrices $\Sigma_j$. 
Since each component
$\cop_{\Sigma_j}$ is a valid copula, the resulting mixture remains a valid
copula.
This construction captures dependence structures that arise as a
superposition of several linear-correlation regimes. For instance, the
``X''-shaped density in Figure~\ref{fig:XO}~(top row) can be
well-approximated by a mixture of two Gaussians with opposite-sign
correlations.
This construction is, however, fundamentally restricted:
each component $\cop_{\Sigma_j}$ is induced by a zero-mean latent Gaussian, so all components share a common center in the latent space $\mathbb{R}^N$, and the Gaussian copula is by construction invariant to marginal location.
Consequently, a mixture of Gaussian copulas can only interpolate between
different \emph{linear} dependence patterns sharing a common center, and
cannot represent dependence structures induced by multimodal latent densities.
Multi-modal structures such as the ring-shaped density in
Figure~\ref{fig:XO}, whose probability mass concentrates on a
manifold away from the origin, therefore lie outside the expressive class
of any mixture of Gaussian copulas, regardless of the number of components~\citep{Khaled2023.Approximating}.

\paragraph{Gaussian Mixture Copulas.}
A \emph{Gaussian Mixture Copula} (GM-C)~\citep{Bilgrau2016.GMCM, Rajan2016.Dependency, Tewari2023.Estimation} is \textbf{not} a mixture of Gaussian copulas.
GM-C first defines a Gaussian mixture model in $\mathbb{R}^N$,
\begin{equation}
	g_\theta(z) \;=\; \sum_{j=1}^{K} \pi_j\, \mathcal{N}(z;\mu_j,\Sigma_j),
\end{equation}
and maps $u_n \in [0,1]$ to the latent space via the marginal inverse CDFs:
$z_n = G_{\theta,n}^{-1}(u_n)$ where $G_{\theta,n}$ denotes the marginal CDF of the Gaussian mixture $g_\theta$.
The copula is then obtained by applying~\eqref{eq:copula} using the mixture density $g_\theta$.
In contrast, a \emph{mixture of Gaussian copulas} applies~\eqref{eq:copula} to each Gaussian component $g_j$ separately, yielding copulas $\cop_{g_j}$, which are then combined via a convex sum.
The difference is therefore the order of operations:
$
\sum_{j=1}^{K} \pi_j\, \cop_{g_j}
\;\neq\;
\cop_{\sum_{j=1}^{K} \pi_j g_j}. 
$
Mixing after the copula transformation forces all components to share the same marginal transformations $G_n^{-1}$ (one per component, applied independently), whereas mixing before the transformation lets the marginal transformations $G_{\theta,n}^{-1}$ depend on the full mixture, so that the induced copula inherits the multimodal structure of $g_\theta$. This enables GM-Cs to represent complex, non-elliptical dependence patterns and, in principle, approximate a broad class of dependence structures arbitrarily well.

\subsection{Probabilistic IMTS Forecasting}

We aim to model the multivariate probability density of $N$ many future values 
$\mathbf{y} = {(y_1, \dots, y_N)}^\top \in \mathbb{R}^N$ of an irregular multivariate time series (IMTS).
Due to the sporadic and irregular sampling across both time and channels, this density is conditioned on two inputs:
(i) a \emph{query} $\mathcal{Q}$ specifying \emph{where}, in future, predictions are required, and
(ii) the observed \emph{history} $\mathcal{X}$.
The forecasting distribution is thus:
\begin{equation}
	p(\mathbf{y} \mid \mathcal{Q}, \mathcal{X}) : \mathbb{R}^N \to \mathbb{R}_{\geq 0}.
	\label{eq:objective}
\end{equation}

The query $\mathcal{Q}$ consists of $N$ target locations,
$
\mathcal{Q} = {\bigl((t_n, c_n)\bigr)}_{n=1}^N,
$
where $(t_n, c_n) \in \mathbb{R} \times \{1,\dots,C\}$, with $t_n$ denoting a continuous timestamp and $c_n$ the channel index. The component $y_n$ corresponds to the target value at the $n$-th query location.
The observed history is given as a collection of $M$ triplets,
$
\mathcal{X} = {\bigl((t_m, c_m, y_m)\bigr)}_{m=1}^M,
$
where $(t_m, c_m, y_m) \in \mathbb{R} \times \{1,\dots,C\} \times \mathbb{R}$. Here, $t_m$ is the observation time, $c_m$ the channel index, and $y_m$ the observed value.
Note that for any two IMTS, query lengths and/or history lengths can vary.
Furthermore, the representation does not impose any intrinsic ordering on the observations or query points.
As a result, the sequences $\mathcal{X}$ and $(\mathcal{Q}, \mathbf{y})$ are naturally interpreted as sets rather than ordered lists, implying that any model operating on them should be permutation-invariant.
A suitable model must (i) handle variable-sized inputs $\mathcal{X}$ and $\mathcal{Q}$, (ii) capture the full joint distribution over $\mathbf{y}$, and (iii) satisfy consistency conditions implied by the Kolmogorov extension theorem~\citep{Oksendal2003.Stochastic}. In particular, it should obey \emph{marginalization consistency}: for any subset of indices $S \subseteq \{1,\dots,N\}$ with complement $-S$,
\begin{equation}
	\int p(\mathbf{y} \mid \mathcal{Q}, \mathcal{X}) \, \mathrm{d}\mathbf{y}_S
	\;=\;
	p(\mathbf{y}_{-S} \mid \mathcal{Q}_{-S}, \mathcal{X}).
	\label{eq:marginalization}
\end{equation}

This condition requires that predictions over any subset of query points remain consistent, regardless of whether additional query points are included or marginalized out~\citep{Yalavarthi2025.Reliable}.

\section{\margmodel{}: A Simple and Strong Model for Univariate Marginals}\label{sec:marg}

Constructing a copula-based model requires a dedicated model for the univariate marginals. Existing models~\citep{DeBrouwer2019.GRUODEBayes,Bilos2021.Neural,Schirmer2022.Modeling} for marginals rely on Neural-ODEs and are restricted to Gaussian distributions. While Neural-ODEs have been shown to be ineffective in forecasting~\citep{Klotergens2024.PhysiomeODE}, limiting marginals to Gaussian is very restrictive. 
To fulfill the necessity of a competitive marginal model, we combine a state-of-the-art encoder with expressive Deep Sigmoidal Flows (DSF)~\citep{Huang2018.Neural} and name the resulting model \margmodel{}. 

\paragraph{Encoder.} 
To parameterize the DSFs of \margmodel{} we need an encoding $\e_n$ for every element of the query $(t_n,c_n) \in \mathcal{Q}$:
\begin{equation}
    \e_n = \enc((t_n,c_n),\mathcal{X})
\end{equation}
Since the final joint model can only be marginalization consistent if every component is, \margmodel{} itself must satisfy this property. This in turn requires an encoder that fully separates the elements of a query $\mathcal{Q}$, so that the predicted distribution at any query point depends only on the observation history $\mathcal{X}$ and that single query point, not on the rest of $\mathcal{Q}$.
The encoder of CircuITS~\citep{Kloetergens2026.Probabilistic} is built to do exactly this, as CircuITS itself is designed to be marginalization consistent. 
As CircuITS is the strongest currently available marginalization-consistent model on our benchmarks, we adopt its encoder unchanged for \margmodel{}.

\paragraph{Normalizing Flows for marginal CDFs.} We use the query encodings ($\e_n$) to model the respective marginal CDFs $F_n$\@ via a DSF~\citep{Huang2018.Neural}. 
A DSF is a strictly monotone scalar transformation $T_{\theta_n}: \R \to (0, 1)$ parameterized by 
$\theta_n = {\{a^{(\ell)}, b^{(\ell)}, w^{(\ell)}\}}_{\ell=1}^{L}$, defined as a composition of $L$ sigmoidal blocks.
A single block of width $M$ acts on a scalar input $x$ as
\begin{equation}
    s(x; a, b, w) = \sigma^{-1}\!\left(\sum_{m=1}^{M} w_m\, \sigma(a_m x + b_m)\right)
\end{equation}
$\sigma$ is the logistic sigmoid, $a_m > 0$, $b_m \in \R$, and $w_m \geq 0$ with $\sum_m w_m = 1$\@. Positivity of $a_m$ and the convex combination over sigmoids ensure that the inner sum is a strictly increasing function valued in $(0, 1)$, so applying $\sigma^{-1}$ yields a strictly increasing map $\R \to \R$\@. 
Stacking $L$ such blocks gives a strictly increasing map $\R \to \R$, and a final logistic sigmoid squashes the output to $(0, 1)$, producing a valid CDF\@. The conditional CDF at query point $n$ is then $F_n(y_n) = T_{\theta_n}(y_n)$, where the block parameters $\theta_n$ are produced from the query encoding $\mathbf{e}_n$ via a small MLP
$\boldsymbol{\bigl(\theta_n \;=\; \mathrm{MLP}_{\mathrm{DSF}}(\mathbf{e}_n)\bigr)}$.
The positivity of $a^{(\ell)}$ is enforced by a softplus activation and the simplex constraint on $w^{(\ell)}$ is enforced by a softmax. Because the map is strictly monotonic, $T_{\theta_n}$ is a one-dimensional normalizing flow that pushes the target $y_n$ forward to $u_n = F_n(y_n) \in (0,1)$, distributed uniformly under the model. The likelihood of a Normalizing Flow is obtained by the change of variable formula,
\begin{equation}
    f_n(y_n) \;=\; \underbrace{p_U\bigl(T_{\theta_n}(y_n)\bigr)}_{=1} \;\cdot\; \left|\frac{\partial T_{\theta_n}}{\partial y_n}(y_n)\right|,
\end{equation}
where the first term vanishes because the base distribution $p_U$ is uniform on $(0,1)$, so the likelihood reduces to the Jacobian of $T_{\theta_n}$. Conditioning $\theta_n$ only on $\mathbf{e}_n$ is what makes \margmodel{} query-separable, and thus marginalization consistent.

\section{\model{}}\label{sec:model}

\begin{figure*}[t]
  \centering
  \resizebox{\textwidth}{!}{\begin{tikzpicture}[
    >=Stealth, 
    font=\sffamily\small, 
    box/.style ={rounded corners=3pt, align=center, fill=white,
                 inner sep=4pt, minimum height=6mm, minimum width=12mm},
    enc/.style ={box, draw=margline, semithick, fill=margnode},
    flow/.style={box, draw=margline, semithick, fill=margnode},
    gmm/.style ={box, draw=copline, semithick, fill=copnode},
    comp/.style={box, fill=brandred, minimum width=22mm, minimum height=5mm},
    sumb/.style={box, fill=brandred, minimum width=26mm},
    icdf/.style={box, fill=brandred, minimum width=18mm},
    sbox/.style={box, fill=brandorange},
    data/.style={align=center, font=\sffamily\scriptsize, inner sep=1pt},
    panelGMM/.style ={draw=copline!40, densely dotted, rounded corners=4pt, fill=copnode!40}, 
    fwd/.style ={->, semithick, draw=black!60}, 
    den/.style ={->, thick, draw=jointline},     
    smp/.style ={->, thick, draw=sampline, densely dashed}, 
    node distance=4mm and 5mm,
]

\node[data] (X) {\large $\mathcal{X}$};
\node[data, below=1cm of X] (Q) {\large $\mathcal{Q}$};

\node[enc,  above right=10mm and 1cm of X] (encM) {Enc$_{\text{m}}$};
\node[flow, right=2cm of encM] (dsf) {DSF};
\node[enc,  below=of dsf] (u) {$u_n=F_n(y_n)$};

\node[data, above=1cm of dsf] (Y) {\large $y_n$};

\node[gmm,  below right=10mm and 1cm of Q] (encC) {Enc$_{\text{c}}$};
\node[gmm,  right=of encC, minimum width=18mm] (mus) {$\Sigma_1, \ldots, \Sigma_K$};
\node[gmm,  below=of mus, minimum width=18mm] (sigma) {$\mu_1, \ldots, \mu_K$};
\node[gmm,  above=of mus, minimum width=18mm] (pi) {$\pi$};

\node[gmm, above right=1mm and 8mm of mus, fill=sampnode, draw=sampline] (zhat) {$\mathbf{\hat{z}}$};
\node[gmm, right=1.1cm of zhat, minimum width=22mm, fill=sampnode, draw=sampline] (gmmcdf)
    {$\hat{u}_n = \cdfgmmn{n}(\hat{z}_n)$};

\node[gmm, below right=1mm and 12mm of mus] (Finv) {$z_n=\cdfgmmn{n}^{-1}(u_n)$};
\node[gmm, right=of Finv] (cop) {$\frac{\pdfgmm(\mathbf{z})}{\prod_n^N \pdfgmmn{n}(z_n)}$};
\node[enc] (marg-lik) at (cop|-dsf) {$\prod^N_n f_n(y_n)$};
\node[data, above=1mm of marg-lik] (marg-lik-text) {\textsc{likelihood}};
\node[data, above=0mm of marg-lik-text] {\textsc{marginal}};

\begin{scope}[on background layer]
  \path let \p1=(encM.west), \p2=(encC.west) in
        coordinate (panelLeft)  at (\x1,0) coordinate (panelLeft2)  at (\x2,0);
  \path let \p1=(marg-lik-text.east), \p2=(gmmcdf.east), \p3=(cop.east) in
        coordinate (panelRight) at ({max(\x1,max(\x2,\x3))},0);
  \path let \p1=(encM.west), \p2=(encC.west) in
        coordinate (panelLeftAll) at ({min(\x1,\x2)},0);

  \node[draw=black!25, densely dashed, rounded corners=6pt, fill=black!2,
        fit=(encM)(dsf)(u)(marg-lik-text)(panelLeftAll|-encM)(panelRight|-encM),
        inner xsep=5pt, inner ysep=4pt,
        label={[anchor=north west, xshift=2pt, font=\sffamily\bfseries\color{black!70}]north west:\margmodel{}}]
       (margpanel) {};
       
  \node[draw=black!25, densely dashed, rounded corners=6pt, fill=copbg!40,
      fit=(encC)(sigma)(pi)(mus)(cop)(zhat)(gmmcdf)(panelLeftAll|-encC)(panelRight|-encC),
        inner xsep=5pt, inner ysep=5mm,
        label={[anchor=north west, xshift=2pt, font=\sffamily\bfseries\color{black!70}]north west:\model{}}]
       (coppanel) {};
       
  \node[panelGMM,
      fit=(pi)(sigma)(mus),
      inner xsep=4pt, inner ysep=3mm,
      label={[text=copline, font=\sffamily\scriptsize\bfseries, anchor=north, yshift=2pt]north: GMM}]
     (gmmpanel) {};
\end{scope}

\draw[fwd] (X) -- (encM.west);
\draw[fwd] (Q) -- (encC.west);
\draw[fwd] (X.east) -- (encC.west);
\draw[fwd] (Q.east) -- (encM.west);
\draw[fwd] (encM) -- (dsf);
\draw[den] (Y) -- (dsf);
\draw[den] (dsf) -- (u);
\draw[den] (dsf) -- (marg-lik);
\draw[fwd] (encC) -- (mus);
\draw[fwd] (encC) -- (pi.west);
\draw[fwd] (encC) -- (sigma.west);
\draw[den] (gmmpanel.east) -- (Finv.west);

\node[data, above=1mm of cop] {\textsc{copula density}};

\draw[den] (u.south) -- ($(u.south)+(0,-0.75cm)$) -| (Finv.north);

\node[circle, draw=brandblueborder, semithick, inner sep=0pt, minimum size=6mm, fill=brandblue]
    (mult) at ([xshift=1cm]$(margpanel.south east)!0.5!(coppanel.north east)$) {$\mathbf{\times}$};
    
\node[box, draw=brandblueborder, semithick, fill=brandblue!30, minimum width=18mm, right=of mult] (pY) {$p(\mathbf{y} \mid \mathcal{Q}, \mathcal{X})$};

\node[data, above=1mm of pY] (py-text) {\textsc{joint likelihood}};
\node[data, above= 0mm of py-text] (condi) {\textsc{conditional}};

\draw[den] (Finv) -- (cop);
\draw[den] (marg-lik.east) -| (mult.north);
\draw[den] (cop.east) -| (mult.south);
\draw[den] (mult.east) -- (pY.west);

\node[data, right=10mm of Y] (yhat) {\large $\hat{y}_n$};

\draw[smp] (gmmpanel.east) -- (zhat.west);
\draw[smp] (zhat) -- (gmmcdf);
\draw[smp] (gmmcdf.north) -- ([yshift=2.5cm]gmmcdf.north) -- (dsf.south east);
\draw[smp] (dsf.north east) -- (yhat);

\end{tikzpicture}}
  \caption{\model{} architecture.
    {\color{jointline}$\boldsymbol{\rightarrow}$} is used to indicate the computation of the joint likelihood. 
    In contrast, we use {\color{sampline}$\boldsymbol{\dashrightarrow}$} to indicate the sampling procedure. 
    {\color{black!70}$\boldsymbol{\rightarrow}$} represents the conditioning via the encoder, which is performed during likelihood estimation and sampling.}\label{fig:arch}
\end{figure*}
We now introduce \model{}, our guaranteed-valid copula model for IMTS\@.
Recall from \Cref{sec:gmc} that a Gaussian Mixture Copula is induced by a single base joint density $\pdfgmm(\mathbf{z}) = \sum_{j=1}^{K} \pi_j\,\mathcal{N}(\mathbf{z};\mu_j,\Sigma_j)$ on $\R^N$, whose own one-dimensional marginal CDFs $\cdfgmmn{n}$ define the copula via Equation~\eqref{eq:copula}.
Designing \model{} therefore reduces to one task: predicting the parameters ${\{\pi_j, \mu_j, \Sigma_j\}}_{j=1}^K$ of this latent Gaussian mixture, conditional on the observation history $\mathcal{X}$ and the query $\mathcal{Q}$, in such a way that the resulting copula is marginalization consistent.
Although we evaluate \model{} together with \margmodel{} from \Cref{sec:marg}, the construction is agnostic to the specific choice of marginals and would work with any marginalization-consistent univariate model that exposes a tractable CDF and PDF\@.
In \Cref{fig:arch} we illustrate how \model{} and \margmodel{} work together to model the joint likelihood of IMTS forecasting queries and how to sample from the learned distributions.

\paragraph{Encoder.}
\model{} reuses the encoder of \Cref{sec:marg} unchanged, both to keep the architecture small and, crucially, to inherit its query-separability property. However, \model{} and \margmodel{} apply two fully separated instances of this encoder with potentially different hyperparameters.
As an intermediate step this encoder produces a global summary of the IMTS $\tilde{\mathbf{H}} \in \R^{C\times D}$, where $C$ represents the number of channels. We aggregate $\tilde{\mathbf{H}}$ into a global summary $\bar{\mathbf{h}} \in \R^D$ obtained by pooling via linear attention with a single attention-query.
\paragraph{Mixture weights.}
The mixture weights $\pi = (\pi_1, \dots, \pi_K)$ are inferred from this global summary by a 2-layer MLP followed by a softmax function:
\begin{equation}
    \pi = \mathrm{softmax}\bigl(\mathrm{MLP}_\pi(\bar{\mathbf{h}})\bigr).
\end{equation}
Since $\pi$ only depends on $\bar{\mathbf{h}}$, which only depends on the observations $X$, it is not informed about which subset of queries is requested. This is intended and helps us to guarantee that \model{} is marginalization consistent.

\paragraph{Mean vectors.}
For each mixture component $j \in \{1, \dots, K\}$, the latent Gaussian has its own mean vector $\mu_j \in \R^N$, with one entry per query point.
We predict the $n$-th entry of $\mu_j$ from the corresponding query embedding $\mathbf{e}_n$ via a small 2-layer MLP whose output dimension is $K$,
\begin{equation}
    {\bigl[\mu_1, \mu_2, \dots, \mu_K\bigr]}_n = \mathrm{MLP}_\mu(\mathbf{e}_n) \in \R^K.\label{eq:mu}
\end{equation}
This factorization is what gives \model{} its expressivity beyond mixtures of Gaussian copulas: because $\mu_j \neq \mu_{j'}$ in general, the components of the latent GMM can have spatially separated modes, and the induced copula can therefore represent multi-modal dependencies.

\paragraph{Standard deviations.}
The diagonal entries of $\Sigma_j$ correspond to the per-query, per-component standard deviations $\sigma_{j,n}$.
We predict the full collection ${\{\sigma_{j,n}\}}_{j=1}^K$ for query point $n$ from $\mathbf{e}_n$ using a separate 2-layer MLP with a softplus output activation to enforce positivity,
\begin{equation}
    \bigl[\sigma_{1,n}, \dots, \sigma_{K,n}\bigr] = \mathrm{softplus}\bigl(\mathrm{MLP}_\sigma(\mathbf{e}_n)\bigr) \in \R_{>0}^{K}.\label{eq:sigma}
\end{equation}
Importantly, since each ${[\mu_j]}_n$ and ${[\sigma_j]}_n$ depend only on $\mathbf{e}_n$, removing query points other than $n$ from $\mathcal{Q}$ leaves ${[\mu_j]}_n$ and ${[\sigma_j]}_n$ unchanged, which is necessary for marginalization consistency.

\paragraph{Covariance matrices.}
The off-diagonal entries of $\Sigma_j$ encode the dependence structure between query points within component $j$, and are the most delicate part of the construction.
We construct each $\Sigma_j \in \R^{N \times N}$ in two steps. Stacking the query embeddings into $E = {[\mathbf{e}_1, \dots, \mathbf{e}_N]}^\top \in \R^{N \times D}$, a 2-layer MLP applied row-wise produces $K$ per-component feature matrices $U_j \in \R^{N \times H}$. We form their Gram matrices, add an identity regularizer, and normalize to correlation matrices,
\begin{equation}
    G_j = U_j U_j^\top + I,
    \qquad
    R_j = D_{G_j}^{-1/2}\, G_j\, D_{G_j}^{-1/2}, \quad D_{G_j} = \mathrm{diag}(G_j)
\end{equation}
Adding $I$ to the Gram matrix guarantees $G_j$ to be positive definite. Finally, we scale $R_j$ on both sides by the predicted standard deviations to obtain the final covariance matrix,
\begin{equation}
    \Sigma_j = D_j R_j D_j, \qquad D_j = \mathrm{diag}(\sigma_{j,1}, \dots, \sigma_{j,N}).
\end{equation}
By construction, $\Sigma_j$ is symmetric positive definite and has the predicted variances $\sigma_{j,n}^2$ on its diagonal.
Crucially, the entry $\Sigma_{j,nm}$ depends only on $\mathbf{e}_n$ and $\mathbf{e}_m$, so dropping any other query point from $\mathcal{Q}$ simply removes the corresponding row and column of $\Sigma_j$, exactly as required when marginalizing a Gaussian.

\paragraph{Marginalization consistency.}
Combining the three observations above, $\pi_j$ does not depend on $\mathcal{Q}$, while $\mu_j$ and $\Sigma_j$ depend on $\mathcal{Q}$ only through the corresponding entries. Marginalizing out a subset of query points thus reduces to the standard marginalization rule for a multivariate Gaussian, and propagates through the mixture and Equation~\eqref{eq:copula}, so \model{} is marginalization consistent by construction. A formal proof is given in \Cref{app:mc-proof}.

\paragraph{Training.}
Following previous work~\citep{Yalavarthi2025.Probabilistic,Yalavarthi2025.Reliable}, we optimize \model{} by minimizing the normalized joint negative log-likelihood (njNLL), defined in~\eqref{eq:njnll}. For copula-based approaches the predicted joint log-density decomposes as in~\eqref{eq:logp} (see~\eqref{eq:sklar}), with $u_n = F_n(y_n \mid \mathcal{Q}, \mathcal{X})$. Because \margmodel{} provides $u_n$ directly, we set $z_n = \cdfgmmn{n}^{-1}(u_n)$ and evaluate the copula term as in~\eqref{eq:copterm}, where $\pdfgmm(\mathbf{z}) = \sum_{j=1}^{K} \pi_j\,\mathcal{N}(\mathbf{z};\mu_j,\Sigma_j)$ is the latent Gaussian mixture density and $\pdfgmmn{n}$ is its $n$-th one-dimensional marginal.
\begin{align}
    \mathcal{L}_{\mathrm{njNLL}}(\theta)
    &= \frac{1}{|\mathcal{B}|} \sum_{(\mathcal{Q},\mathcal{X},\mathbf{y}) \in \mathcal{B}}
    -\frac{1}{|\mathbf{y}|}\log \hat p(\mathbf{y}\mid \mathcal{Q}, \mathcal{X}), \label{eq:njnll}\\
    \log \hat p(\mathbf{y}\mid \mathcal{Q}, \mathcal{X})
    &= \log \cop(u_1, \dots, u_N) + \sum_{n=1}^{N} \log f_n(y_n \mid \mathcal{Q}, \mathcal{X}), \label{eq:logp}\\
    \log \cop(u_1, \dots, u_N)
    &= \log \pdfgmm(\mathbf{z})
    - \sum_{n=1}^{N} \log \pdfgmmn{n}(z_n). \label{eq:copterm}
\end{align}
We use a \textbf{two-stage optimization scheme}~\citep{Ashok2023.TACTiS2}: First, we train \margmodel{} on its own by deactivating the copula and treating all query points as independent. Then, we freeze its weights and train the copula. In the second stage, the marginal term is constant, so minimizing njNLL is equivalent to maximizing the copula log-density. Evaluating the log copula density requires $z_n = \cdfgmmn{n}^{-1}(u_n)$, but the GMM marginal CDF cannot be inverted analytically. We obtain $z_n$ by a safeguarded Newton-bisection scheme in the forward pass (\Cref{app:numerical-icdf}), and compute gradients of $\cdfgmmn{n}^{-1}$ with respect to the GMM parameters analytically~\citep{Tewari2023.Estimation}. \Cref{prop:icdf-grad} states the resulting closed forms, and \Cref{app:icdf-grad} derives them step by step; in the backward pass we compose these expressions with the softmax and softplus parameterizations.

\begin{lemma}[Gradients of the GMM inverse CDF]\label{prop:icdf-grad}
Let $z_n = \cdfgmmn{n}^{-1}(u_n)$, and let $\Phi$ and $\varphi$ denote the standard-normal CDF and PDF\@. Then the gradients of the GMM parameters are:
\begin{equation*}
    \frac{\partial z_n}{\partial \pi_j}
    = -\frac{\Phi\!\left(\frac{z_n-\mu_{j,n}}{\sigma_{j,n}}\right)}{\pdfgmmn{n}(z_n)},
    \quad
    \frac{\partial z_n}{\partial \mu_{j,n}}
    = \frac{\pi_j\,\varphi\!\left(\frac{z_n-\mu_{j,n}}{\sigma_{j,n}}\right)}{\sigma_{j,n}\,\pdfgmmn{n}(z_n)},
    \quad
    \frac{\partial z_n}{\partial \sigma_{j,n}^2}
    = \frac{\pi_j\,\varphi\!\left(\frac{z_n-\mu_{j,n}}{\sigma_{j,n}}\right)\,(z_n-\mu_{j,n})}{2\,\sigma_{j,n}^3\,\pdfgmmn{n}(z_n)}
\end{equation*}
\end{lemma}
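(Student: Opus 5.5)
We apply the implicit function theorem to the defining relation of $z_n$. The $n$-th marginal of the latent mixture $\pdfgmm$ is the univariate mixture $\pdfgmmn{n}(z)=\sum_{j}\pi_j\,\sigma_{j,n}^{-1}\varphi\bigl((z-\mu_{j,n})/\sigma_{j,n}\bigr)$. This is the standard marginalization rule for Gaussians: component $j$ has mean $\mu_{j,n}$ and variance $\sigma_{j,n}^2$, since $\Sigma_j=D_jR_jD_j$ has unit-diagonal $R_j$. Hence its CDF is
\begin{equation*}
\cdfgmmn{n}(z)=\sum_{j=1}^K \pi_j\,\Phi\!\left(\frac{z-\mu_{j,n}}{\sigma_{j,n}}\right).
\end{equation*}
Define $H(z,\theta)=\cdfgmmn{n}(z;\theta)-u_n$. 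Since $u_n$ is produced by the frozen marginal model, it does not depend on the GMM parameters, and $z_n$ is characterized by $H(z_n,\theta)=0$.

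The partial derivative $\partial_z H=\pdfgmmn{n}(z_n)$ is strictly positive, because every component has positive weight and positive variance and $\varphi>0$. The map $\theta\mapsto H$ is smooth, so the implicit function theorem applies. It gives $z_n$ as a differentiable function of $\theta$ with
\begin{equation*}
\frac{\partial z_n}{\partial\theta}=-\frac{\partial_\theta \cdfgmmn{n}(z_n)}{\pdfgmmn{n}(z_n)}.
\end{equation*}
Each formula in the lemma then follows from one partial derivative of the CDF, holding $z$ fixed.

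First, $\partial\cdfgmmn{n}/\partial\pi_j=\Phi\bigl((z-\mu_{j,n})/\sigma_{j,n}\bigr)$. This treats the $\pi_j$ as free coordinates, which is the convention of the statement; the simplex constraint is handled afterwards by composing with the softmax Jacobian, as the text notes. Second, by the chain rule, $\partial\cdfgmmn{n}/\partial\mu_{j,n}=-\pi_j\varphi(\cdot)/\sigma_{j,n}$, and the two minus signs cancel to give the stated positive expression. Third, we write $\Phi\bigl((z-\mu)/\sqrt{v}\bigr)$ with $v=\sigma^2$ and differentiate in $v$. This gives $-\pi_j\varphi(\cdot)\,(z-\mu)\,\tfrac12 v^{-3/2}=-\pi_j\varphi(\cdot)(z-\mu)/(2\sigma^3)$, and the sign again flips to the stated one.

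The calculus is routine, so the main point needing care is justifying the implicit function step. We need $\pdfgmmn{n}(z_n)>0$, which holds everywhere here. We also need $z_n$ to be well defined and finite, which requires $u_n\in(0,1)$; this is guaranteed because the DSF output lies in the open interval. A secondary point is bookkeeping: the variance parameterization in the last formula differs from the standard-deviation parameterization $\sigma_{j,n}$ that the network actually outputs. The two are related by $\partial/\partial\sigma=2\sigma\,\partial/\partial\sigma^2$, which is then composed with the softplus in the backward pass.
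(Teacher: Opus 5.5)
Your proposal is correct and follows the same route as the paper. The paper also differentiates the identity $F_{\mathrm{GMM},n}(z_n)=u_n$ implicitly to get $\partial z_n/\partial\xi=-\bigl(\partial_\xi F_{\mathrm{GMM},n}\bigr)(z_n)/f_{\mathrm{GMM},n}(z_n)$, and then computes the same three explicit partials of the mixture CDF at fixed $z$, with the $\pi_j$ treated as free coordinates. Your extra checks (the component variances are $\sigma_{j,n}^2$ because $R_j$ has unit diagonal, and $u_n\in(0,1)$ gives a finite root) add a little rigor but are not a different argument.
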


\paragraph{Sampling.}

Given $\mathcal{X}$ and $\mathcal{Q}$, we first run the encoder to parameterize the GM-C\@.
We then draw a sample $\hat{\mathbf{z}} \in \R^N$ from this mixture by first sampling a component index $j \sim \mathrm{Categorical}(\pi)$ and then $\hat{\mathbf{z}} \sim \mathcal{N}(\mu_j, \Sigma_j)$.
Next, we transform $\hat{\mathbf{z}}$ to the unit cube using the GMM's own marginal CDFs $\cdfgmmn{n}$, that is $\hat{u}_n = \cdfgmmn{n}(\hat{z}_n)$, where each $\cdfgmmn{n}$ is itself a one-dimensional Gaussian mixture CDF and is available in closed form.
Finally, we invert the flow of the marginal model on each coordinate to obtain $\hat{y}_n = F_n^{-1}(\hat{u}_n)$.

\paragraph{Limitations.}
Each $\Sigma_j$ has a rank-$H$ off-diagonal Gram structure, so \model{} cannot represent near-full-rank dependence when $H \ll N$. In practice this ceiling is rarely binding (real-world dependencies are rarely full-rank) and improves scaling in $N$ (see \Cref{app:sensitivity} and \Cref{app:complexity}).
Second, to guarantee marginalization consistency, the mixture weights $\pi$ depend solely on the history $\mathcal{X}$, not on the query $\mathcal{Q}$.
\section{Related Work}
\paragraph{Deep Learning for probabilistic IMTS forecasting.} The majority of the IMTS forecasting literature concerns simple point forecasting~\citep{Yalavarthi2024.GraFITi,Li2025.HyperIMTS,Luo2025.HiPatch,Zhang2024.Irregular}. 
To address the need for uncertainty quantification, architectures such as GRU-ODE-Bayes~\citep{DeBrouwer2019.GRUODEBayes}, Continuous Recurrent Units (CRU)~\citep{Schirmer2022.Modeling}, and Neural Flows~\citep{Bilos2021.Neural} modeled the forecasting targets as independent Gaussians.
Recent Normalizing Flow~\citep{Papamakarios2021.Normalizing}-based architectures aim to capture the full joint distribution of an IMTS.\@ 
ProFITi~\citep{Yalavarthi2025.Probabilistic} models complex joint dependencies but lacks \textit{marginalization consistency}, frequently yielding contradictory forecasts across variable subsets.
Conversely, MOSES~\citep{Yalavarthi2025.Reliable} guarantees consistency by applying separable flows over a latent multivariate Gaussian mixture model.
CircuITS~\citep{Kloetergens2026.Probabilistic} addresses this expressivity trade-off by explicitly modeling inter-channel dependencies via probabilistic circuits~\citep{choi2020probabilistic} and intra-channel dynamics using Gaussian Copulas. While CircuITS is marginalization-consistent and theoretically a universal approximator, circuits are notoriously inefficient at parameterizing dense continuous dependencies (such as linear correlations).

\paragraph{Copulas for Time Series.}
 
Copulas are a foundational tool for modeling time series dependencies in finance and classical statistics~\citep{Patton2012.Review, Grosser2022.Copulae}. However, these applications typically assume a single, global distribution across the dataset, which does not hold in our domain. 
Machine Learning-based copulas for time series have historically been centered around regular time series and were restricted to Gaussian Copulas~\citep{Wilson2010.Copula, Salinas2019.Highdimensional,Wen2019.Deep}.
TACTiS~\citep{Drouin2022.TACTiS} and TACTiS-2~\citep{Ashok2023.TACTiS2} attempt to address this gap by presenting an attentional copula that can be applied to irregular time series. However, these models do not preserve the marginals by architectural design. Instead, they rely entirely on   convergence theory to learn this property.
\section{Experiments}\label{sec:experiments}
We empirically evaluate \margmodel{} and \model{} against state-of-the-art IMTS forecasting baselines, examining marginal likelihood, joint likelihood, the contribution of the GM-C, the comparison against TACTiS-2, and the validity gap that follows from \model{}'s marginalization-consistent construction.
\paragraph{Datasets.}
Our evaluation uses the four benchmark IMTS datasets that appear throughout the paper: \texttt{USHCN}~\citep{Menne2016.LongTerm}, \texttt{PhysioNet-2012}~\citep{Silva2012.Predicting}, \texttt{MIMIC-III}~\citep{Johnson2016.MIMICIII}, and \texttt{MIMIC-IV}~\citep{Johnson2023.MIMICIV}. Crucially, the well-established preprocessing, binning, and split protocol is adopted directly from prior work~\citep{Bilos2021.Neural,Yalavarthi2025.Probabilistic,Yalavarthi2025.Reliable,Kloetergens2026.Probabilistic} to provide a fair comparison.
This protocol trains and evaluates each model five times on different train, validation, and test splits. Importantly, every model is evaluated on the same five splits, so the runs are paired across methods. We refer to~\Cref{app:exp-details} for more details. 

\paragraph{Baselines.}
Our comparison covers both univariate and joint baselines. On the univariate side, the baselines are GRU-ODE~\citep{DeBrouwer2019.GRUODEBayes}, NeuralFlows~\citep{Bilos2021.Neural}, and CRU~\citep{Schirmer2022.Modeling}. On the joint side, we use ProFITi~\citep{Yalavarthi2025.Probabilistic}, Gaussian Process Regression~\citep{Durichen2015.Multitask}, MOSES~\citep{Yalavarthi2025.Reliable}, and CircuITS~\citep{Kloetergens2026.Probabilistic}.
As a copula baseline we use TACTiS-2~\citep{Ashok2023.TACTiS2}. Since it was not originally designed for sparse IMTS, we adapt it to our setting; we describe this adaptation in detail in \Cref{app:tactis-adapt}. Details on hyperparameters are given in \Cref{app:hyperparams}.
Because we adopt the identical protocol used in prior work, baseline numbers for GRU-ODE, NeuralFlows, CRU, GPR, ProFITi, MOSES, and CircuITS are taken directly from~\citet{Yalavarthi2025.Probabilistic,Yalavarthi2025.Reliable,Kloetergens2026.Probabilistic}.
\begin{table}[!htp]
\centering
\caption{Comparing \textbf{mNLL} (Marginal Negative Log-Likelihood) across datasets. 
Lower values indicate better performance. We present the mean and standard deviation of 5 runs. The best model is marked in \textbf{bold}.}\label{tab:mnll}
\small
\setlength{\tabcolsep}{3mm}
\begin{tabular}{llrrrr}
\toprule
& \textbf{Model} & \multicolumn{1}{c}{\texttt{USHCN}} & \multicolumn{1}{c}{\texttt{Physionet}} & \multicolumn{1}{c}{\texttt{MIMIC-III}} & \multicolumn{1}{c}{\texttt{MIMIC-IV}} \\
\midrule
\multirow{5}{*}{\rotatebox{90}{Joint}}
& ProFITi & \metric{-3.324}{0.206} & \metric{-0.016}{0.085} & \metric{0.408}{0.030} & \metric{0.500}{0.322} \\
& GPR & \metric{1.235}{0.096} & \metric{1.161}{0.065} & \metric{1.341}{0.009} & \metric{1.161}{0.010} \\
& MOSES & \metric{-3.355}{0.156} & \metric{-0.271}{0.028} & \metric{0.163}{0.026} & \metric{-0.634}{0.017} \\
& CircuITS & \metric{-3.717}{0.201} & \metric{-0.287}{0.037} & \metric{0.095}{0.107} & \metric{-0.731}{0.063} \\
& Joint-Abl & \metric{-3.663}{0.210} & \metric{-0.176}{0.470} & \metric{0.108}{0.238} & \metric{-0.677}{0.044} \\
\midrule
\multirow{4}{*}{\rotatebox{90}{Univariate}}
& GRU-ODE & \metric{0.776}{0.172} & \metric{0.504}{0.061} & \metric{0.839}{0.030} & \metric{0.876}{0.589} \\
& NeuralFlows & \metric{0.775}{0.180} & \metric{0.492}{0.029} & \metric{0.866}{0.097} & \metric{1.796}{0.050} \\
& CRU & \metric{0.762}{0.180} & \metric{0.931}{0.019} & \metric{1.209}{0.044} & \metricoom{} \\
& \textbf{\margmodel{}} & \metricb{-3.948}{0.294} & \metricb{-0.479}{0.021} & \metricb{-0.158}{0.205} & \metricb{-0.833}{0.077} \\
\bottomrule
\end{tabular}
\end{table}

\paragraph{\margmodel{} predicts better marginals.}
\Cref{tab:mnll} reports the marginal Negative Log-Likelihood (mNLL, defined in~\eqref{eq:mnll}, \Cref{app:metrics}) of \margmodel{} alongside the baselines. On all four datasets, \margmodel{} has the lowest mNLL.\@
To assess the effect of the isolated marginal training, we also report the Joint-Ablation (Joint-Abl), which uses the same architecture as \model{} but trains all weights jointly under njNLL.\@ We want to highlight that the mNLL of Joint-Abl is significantly worse than that of \margmodel{} trained in isolation and on par with the best \emph{joint} baselines MOSES and CircuITS.\@ We present learning curves of Joint-Abl and the \margmodel{} + \model{} setup in \Cref{app:learning-curves}.
\begin{table}[!htp]
\centering
\caption{Comparing \textbf{njNLL} (Normalized Joint Negative Log-Likelihood) across datasets.
Lower values indicate better performance. We present the mean and standard deviation of 5 runs. 
The best model is marked in \textbf{bold}.}\label{tab:njnll}
\small
\setlength{\tabcolsep}{2mm}
\begin{tabular}{llrrrr}
\toprule
& \textbf{Model} & \multicolumn{1}{c}{\texttt{USHCN}} & \multicolumn{1}{c}{\texttt{Physionet}} & \multicolumn{1}{c}{\texttt{MIMIC-III}} & \multicolumn{1}{c}{\texttt{MIMIC-IV}} \\
\midrule
\multirow{4}{*}{\rotatebox{90}{Univariate}}
& GRU-ODE & \metric{0.766}{0.159} & \metric{0.501}{0.001} & \metric{0.961}{0.064} & \metric{0.823}{0.318} \\
& NeuralFlows & \metric{0.775}{0.152} & \metric{0.496}{0.000} & \metric{0.998}{0.111} & \metric{0.689}{0.087} \\
& CRU & \metric{0.761}{0.191} & \metric{1.057}{0.007} & \metric{1.234}{0.076} & \metricoom{} \\
& \margmodel{} & \metric{-3.969}{0.305} & \metric{-0.442}{0.021} & \metric{-0.505}{0.208} & \metric{-1.819}{0.065} \\
\midrule
\multirow{5}{*}{\rotatebox{90}{Joint}}
& ProFITi & \metric{-3.226}{0.225} & \metric{-0.647}{0.078} & \metric{-0.377}{0.032} & \metric{-1.777}{0.066} \\
& GPR & \metric{2.011}{1.376} & \metric{1.367}{0.074} & \metric{3.146}{0.359} & \metric{2.789}{0.057} \\
& MOSES & \metric{-3.357}{0.176} & \metric{-0.491}{0.041} & \metric{-0.305}{0.027} & \metric{-1.668}{0.097} \\
& CircuITS & \metric{-3.789}{0.218} & \metric{-0.550}{0.013} & \metric{-0.574}{0.080} & \metric{-2.113}{0.044} \\
& Joint-Abl & \metric{-3.907}{0.269} & \metric{-0.489}{0.399} & \metric{-0.508}{0.309} & \metric{-2.019}{0.049} \\
\midrule
\multirow{3}{*}{\rotatebox{90}{Copula}}
& TACTiS-2 & \metricb{-4.254}{0.294} & \metric{-0.732}{0.115} & \metric{-0.780}{0.282} & \metric{-2.053}{0.170} \\
& \model{} (Mix-GC) & \metric{-4.078}{0.308} & \metric{-0.728}{0.018} & \metric{-0.779}{0.216} & \metric{-2.080}{0.064} \\
& \textbf{\model{}} & \metric{-4.135}{0.304} & \metricb{-0.745}{0.019} & \metricb{-0.835}{0.218} & \metricb{-2.201}{0.058} \\
\bottomrule
\end{tabular}%

\end{table}

\paragraph{Copulas with good marginals outperform jointly trained models.} Equipped with \margmodel{}'s marginals, the copula models consistently outperform all jointly trained baselines (including Joint-Abl) in terms of joint likelihood (\Cref{tab:njnll}). The sole exception occurs on MIMIC-IV, where CircuITS marginally edges out two of the three copula variants; nevertheless, \model{} retains a strictly superior marginal likelihood even in this case.
\paragraph{\model{} can utilize the expressivity of GM-C.}
We investigate the impact of using a GM-C over a Mix-GC in two ways.\@ First, we introduce the ablation \emph{\model{}-Mix-GC}, in which the GM-C is replaced by a Mix-GC.\@ We refer to \Cref{app:mix-gc} for more details. Second, we compare \model{} against TACTiS-2.
Comparing two copulas based only on the standard deviation over the five runs is insufficient, because the evaluation protocol uses different splits for each run. In addition, the performance of \margmodel{} varies between runs. Hence, the paired Corrected Resampled $t$-test~\citep{Nadeau1999.Inference} shown in \Cref{tab:ttest} provides more insight into significance. \model{} has a $p$-value $< 0.05$ versus \model{}-Mix-GC on all datasets but \texttt{USHCN}, where it is close to $0.05$.
Compared to TACTiS-2, \model{} achieves a lower mean njNLL on \texttt{PhysioNet-2012}, \texttt{MIMIC-III}, and \texttt{MIMIC-IV}, though the $t$-test cannot rule out random noise on any of these three datasets.
\begin{table}[h]
\centering
\caption{Bounds of p-values of corrected resampled Student's $t$-test~\citep{Nadeau1999.Inference} across datasets.}\label{tab:ttest}
\small
\setlength{\tabcolsep}{3mm}
\begin{tabular}{llcccc}
\toprule
\textbf{Model} & \multicolumn{1}{c}{\texttt{USHCN}} & \multicolumn{1}{c}{\texttt{Physionet}} & \multicolumn{1}{c}{\texttt{MIMIC-III}} & \multicolumn{1}{c}{\texttt{MIMIC-IV}} \\
\midrule
\model{} / \model{}-Mix-GC & 0.0538 & 0.0017& 0.0006 & 0.0000 \\
\model{} / TACTiS-2 & 0.0006 & 0.8563 & 0.3759 & 0.2255 \\
\bottomrule
\end{tabular}
\end{table}

\paragraph{Quantifying that \model{} is \emph{more valid} than TACTiS-2.}
To quantify the validity gap, we draw $1000$ samples from \margmodel{} and from the full copula setup, and compare the per-dimension univariate Wasserstein-1 distance between the resulting empirical copulas (see \Cref{app:wd} for the definition).
Lower values indicate a closer match in between distributions. 
The \emph{Control} row is not a separate copula model: it is obtained by sampling twice from the same \margmodel{} instance using different random seeds. Therefore, it represents the irreducible sampling error. 
As shown in \Cref{tab:wd}, \model{} is either within standard deviations of the control or very close to it. 
The WD of TACTiS-2, however, is orders of magnitude higher, indicating that it did not learn to perfectly preserve the marginal distributions. 
The experiment above probes only the \emph{univariate} marginalization consistency, which is the defining property of a copula. Due to the page limit, we refer to \Cref{app:tactis-breaker} for a more in-depth analysis of TACTiS-2's marginalization inconsistency.
\begin{table}[h]
\centering
\caption{Wasserstein distances $\times 10^{-3}$ for 1000 samples from \margmodel{} (alone) and the \margmodel{} + copula set up averaged over 5 splits. The Control row quantifies the sampling error calculated based on sampling twice from \margmodel{} with different random seeds.}\label{tab:wd}
\small
\setlength{\tabcolsep}{3mm}
\begin{tabular}{lcccc}
\toprule
\textbf{Model} & \multicolumn{1}{c}{\texttt{USHCN}} & \multicolumn{1}{c}{\texttt{Physionet}} & \multicolumn{1}{c}{\texttt{MIMIC-III}} & \multicolumn{1}{c}{\texttt{MIMIC-IV}} \\
\midrule
TACTiS-2         & \metric{27.60}{14.47} & \metric{10.78}{2.978} & \metric{15.72}{8.570} & \metric{2.939}{1.133} \\
\model{} (GM-C)  & \metric{5.083}{2.444} & \metric{0.535}{0.131} & \metric{0.524}{0.079} & \metric{0.381}{0.081} \\
\midrule
Control          & \metric{5.485}{2.541} & \metric{0.633}{0.142} & \metric{0.408}{0.043} & \metric{0.274}{0.022} \\
\bottomrule
\end{tabular}%

\end{table}

\section{Conclusion}\label{sec:conclusion}

We introduced \model{}, the first marginalization-consistent copula model for irregular multivariate time series.
By inferring a Gaussian Mixture Copula through a query-separable encoder that predicts mixture weights, means, and covariance matrices in a way that commutes with marginalization, \model{} matches the expressivity of attentional copulas while providing structural validity guarantees.
\margmodel{}, a specialized DSF-based marginal model, produces the most accurate marginals on every benchmark, and feeding these marginals into copulas yields accurate estimation of the joint likelihood. Looking ahead, \model{} naturally extends to other conditional joint density estimation problems with variable target sets, most notably missing-value imputation in tabular data.

\newpage
{
\small
\bibliographystyle{plainnat}
\bibliography{references}
}


\appendix
\section{Marginalization Consistency of \model{}}\label{app:mc-proof}

In this appendix we give a formal proof that \model{} is marginalization consistent by construction. The argument reuses the framework of separable flows and mixtures of separable flows introduced by~\citet{Yalavarthi2025.Reliable}. We first restate the two lemmas of that paper that we will invoke, adapted to the notation of the present work, and then apply them to \model{}.

We adopt the notation of \Cref{sec:model}: $\mathcal{X}$ denotes the observation history, $\mathcal{Q} = {\{(t_n, c_n)\}}_{n=1}^{N}$ the query set, $\mathcal{Q}_n = (t_n, c_n)$ the $n$-th query point, $\mathbf{e}_n$ the per-query embedding produced by the \margmodel{} encoder, $F_n$ the \margmodel{} marginal CDF at query $n$, and $F_{\mathrm{GMM}, n}$ the $n$-th one-dimensional marginal of the latent Gaussian mixture density $\pdfgmm$. We say that a conditional density $\hat{p}(\mathbf{y} \mid \mathcal{Q}, \mathcal{X})$ on $\R^{|\mathcal{Q}|}$ is \emph{marginalization consistent} if for every index $k \in \{1, \dots, |\mathcal{Q}|\}$,
\begin{equation}\label{eq:mc-def}
\int_{\R} \hat{p}(\mathbf{y} \mid \mathcal{Q}, \mathcal{X}) \, dy_k \;=\; \hat{p}(\mathbf{y}_{-k} \mid \mathcal{Q}_{-k}, \mathcal{X}),
\end{equation}
where $\mathcal{Q}_{-k}$ and $\mathbf{y}_{-k}$ denote $\mathcal{Q}$ and $\mathbf{y}$ with the $k$-th entry removed; the property extends to arbitrary subsets by induction.

\subsection{Cited Lemmas from Yalavarthi et al.\ (2025b)}\label{app:mc-lemmas}

\begin{lemma}[Separable flows preserve marginalization consistency; \citealp{Yalavarthi2025.Reliable}, Lemma~3.1]\label{lem:moses-31}
Let $f(\,\cdot\, \mid \mathcal{Q}, \mathcal{X}) : \R^{|\mathcal{Q}|} \to \R^{|\mathcal{Q}|}$ be a conditional flow that is separable, i.e.\ of the form
\begin{equation}
f(\mathbf{z} \mid \mathcal{Q}, \mathcal{X}) \;=\; \bigl(\phi(z_1 \mid \mathcal{Q}_1, \mathcal{X}), \dots, \phi(z_{|\mathcal{Q}|} \mid \mathcal{Q}_{|\mathcal{Q}|}, \mathcal{X})\bigr)
\end{equation}
for some univariate function $\phi$ that is invertible in its first argument. If the base density $\hat{p}_Z(\mathbf{z} \mid \mathcal{Q}, \mathcal{X})$ is marginalization consistent, then the pushforward density
\begin{equation}
\hat{p}(\mathbf{y} \mid \mathcal{Q}, \mathcal{X}) \;=\; \hat{p}_Z\bigl(f^{-1}(\mathbf{y} \mid \mathcal{Q}, \mathcal{X}) \,\big|\, \mathcal{Q}, \mathcal{X}\bigr) \cdot \left| \det \frac{\partial f^{-1}(\mathbf{y} \mid \mathcal{Q}, \mathcal{X})}{\partial \mathbf{y}} \right|
\end{equation}
is also marginalization consistent.
\end{lemma}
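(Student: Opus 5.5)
The plan is a direct change-of-variables computation that reduces the claim to the marginalization consistency of the base density. Write $N = |\mathcal{Q}|$ and $g_n(y_n) := \phi^{-1}(y_n \mid \mathcal{Q}_n, \mathcal{X})$, the inverse of the $n$-th univariate map in its first argument. First I will record what separability gives: $f^{-1}$ acts coordinatewise, $f^{-1}(\mathbf{y} \mid \mathcal{Q}, \mathcal{X}) = (g_1(y_1), \dots, g_N(y_N))$. Hence the Jacobian of $f^{-1}$ is diagonal, and
\begin{equation*}
\hat{p}(\mathbf{y} \mid \mathcal{Q}, \mathcal{X}) \;=\; \hat{p}_Z\bigl(g_1(y_1), \dots, g_N(y_N) \,\big|\, \mathcal{Q}, \mathcal{X}\bigr) \prod_{n=1}^{N} \bigl|g_n'(y_n)\bigr| .
\end{equation*}
I will assume each $\phi(\cdot \mid \mathcal{Q}_n, \mathcal{X})$ is a $C^1$ diffeomorphism onto its image. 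This is implicit in the use of the change-of-variables formula in the statement, and it guarantees that every $g_n$ is strictly monotone and absolutely continuous.

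The key structural observation is that $g_n$ depends only on $(\mathcal{Q}_n, \mathcal{X})$. Consequently, deleting the $k$-th query point leaves the maps $g_n$ for $n \neq k$ unchanged, and the flow for $(\mathcal{Q}_{-k}, \mathcal{X})$ is exactly the separable map built from those $g_n$. This is the only place separability is really used: it guarantees that the right-hand side of \eqref{eq:mc-def}, computed with the same $\phi$ on the reduced query set, has the form
\begin{equation*}
\hat{p}(\mathbf{y}_{-k} \mid \mathcal{Q}_{-k}, \mathcal{X}) \;=\; \hat{p}_Z\bigl((g_n(y_n))_{n\neq k} \,\big|\, \mathcal{Q}_{-k}, \mathcal{X}\bigr) \prod_{n \neq k} \bigl|g_n'(y_n)\bigr| .
\end{equation*}

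Next I verify \eqref{eq:mc-def} for a fixed index $k$. In $\int_{\R} \hat{p}(\mathbf{y} \mid \mathcal{Q}, \mathcal{X})\, dy_k$, the factors $\prod_{n \neq k} |g_n'(y_n)|$ do not depend on $y_k$ and can be pulled out of the integral. In the remaining integral $\int_{\R} \hat{p}_Z(\dots, g_k(y_k), \dots) \, |g_k'(y_k)| \, dy_k$, I substitute $z_k = g_k(y_k)$. Because $g_k$ is strictly monotone, the absolute value absorbs the orientation, and the integral becomes $\int_{\R} \hat{p}_Z(\mathbf{z} \mid \mathcal{Q}, \mathcal{X})\, dz_k$ evaluated at $z_n = g_n(y_n)$ for $n \neq k$. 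The marginalization consistency of $\hat{p}_Z$ turns this into $\hat{p}_Z((g_n(y_n))_{n \neq k} \mid \mathcal{Q}_{-k}, \mathcal{X})$. Multiplying back by $\prod_{n\neq k} |g_n'(y_n)|$ reproduces the reduced-query expression from the previous paragraph, which proves \eqref{eq:mc-def} for index $k$. The statement for arbitrary subsets then follows by induction on the number of removed indices, as remarked after \eqref{eq:mc-def}.

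The main technical point is the substitution step when $\phi$ maps onto a proper interval rather than all of $\R$, as with the DSF, which maps into $(0,1)$. In that case $z_k$ ranges only over the image $I_k = g_k(\R)$, so the integral after substitution is over $I_k$ rather than $\R$. I will handle this by working with densities on product domains. The base density is supported on $\prod_n g_n(\R)$, or one simply declares $\hat{p}_Z$ to be zero outside this set, so the integral over $I_k$ coincides with the integral over $\R$ and the consistency hypothesis on $\hat{p}_Z$ still applies. Measurability and Fubini are unproblematic here because all integrands are nonnegative, so Tonelli's theorem suffices.
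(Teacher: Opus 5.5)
Your argument is correct and follows the same route the paper sketches, which defers the full details to \citet{Yalavarthi2025.Reliable}. Separability makes the Jacobian of $f^{-1}$ diagonal, the factors for $n \neq k$ pull out of the $y_k$-integral, the substitution $z_k = g_k(y_k)$ turns it into a $z_k$-integral, and consistency of $\hat{p}_Z$ together with the fact that each $g_n$ depends only on $(\mathcal{Q}_n, \mathcal{X})$ closes the argument.

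Your last paragraph is unnecessary and has the direction reversed. Since $\phi(\cdot \mid \mathcal{Q}_n, \mathcal{X})$ is defined on all of $\mathbb{R}$, a proper image can only occur on the $y$-side, where $\hat{p}$ is simply zero off the image and $z_k$ still ranges over all of $\mathbb{R}$. So truncating $\hat{p}_Z$ is never needed, and doing so would in general break its consistency and normalization. In the paper's application $\phi = F_n^{-1} \circ F_{\mathrm{GMM},n}$ is a bijection of $\mathbb{R}$ anyway.
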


\begin{lemma}[Mixtures with query-independent weights preserve marginalization consistency; \citealp{Yalavarthi2025.Reliable}, Lemma~3.2]\label{lem:moses-32}
Let $\hat{p}_1, \dots, \hat{p}_D$ be conditional densities that are each marginalization consistent, and let $w : \mathrm{Seq}(\mathcal{X}) \to \Delta^D$ be a weight function that depends only on the observation history $\mathcal{X}$. Then the mixture
\begin{equation}
\hat{p}(\mathbf{y} \mid \mathcal{Q}, \mathcal{X}) \;=\; \sum_{d=1}^{D} w_d(\mathcal{X}) \, \hat{p}_d(\mathbf{y} \mid \mathcal{Q}, \mathcal{X})
\end{equation}
is marginalization consistent.
\end{lemma}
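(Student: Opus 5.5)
The plan is to verify the defining identity \eqref{eq:mc-def} directly for a single removed index $k$, and then obtain arbitrary subsets $S$ by the induction already noted after \eqref{eq:mc-def}.

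Fix $\mathcal{X}$, $\mathcal{Q}$ and $k \in \{1,\dots,|\mathcal{Q}|\}$. First I will integrate the mixture over $y_k$. The sum over $d$ is finite and every term $w_d(\mathcal{X})\,\hat{p}_d(\mathbf{y}\mid\mathcal{Q},\mathcal{X})$ is nonnegative and measurable. Hence linearity of the Lebesgue integral (or Tonelli, if one prefers) lets me exchange sum and integral. Since $w_d(\mathcal{X})$ does not involve $y_k$, it can be pulled out of the integral:
\begin{equation*}
\int_{\R} \hat{p}(\mathbf{y}\mid\mathcal{Q},\mathcal{X})\,dy_k \;=\; \sum_{d=1}^{D} w_d(\mathcal{X}) \int_{\R} \hat{p}_d(\mathbf{y}\mid\mathcal{Q},\mathcal{X})\,dy_k .
\end{equation*}

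Second, I apply the hypothesis that each $\hat{p}_d$ is marginalization consistent. This replaces each inner integral by $\hat{p}_d(\mathbf{y}_{-k}\mid\mathcal{Q}_{-k},\mathcal{X})$.

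Third, and this is the only point where the assumption on $w$ is used, I compare the result with the mixture evaluated at the reduced query. By definition,
\begin{equation*}
\hat{p}(\mathbf{y}_{-k}\mid\mathcal{Q}_{-k},\mathcal{X}) = \sum_{d} w_d(\mathcal{X})\,\hat{p}_d(\mathbf{y}_{-k}\mid\mathcal{Q}_{-k},\mathcal{X}).
\end{equation*}
The weights appearing here are the \emph{same} $w_d(\mathcal{X})$ as before, because $w$ sees only $\mathcal{X}$ and not the query. The two expressions therefore coincide term by term, which proves \eqref{eq:mc-def}. If $w$ were allowed to depend on $\mathcal{Q}$, the weights for $\mathcal{Q}$ and $\mathcal{Q}_{-k}$ would generally differ and the identity would fail. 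This is exactly why \model{} computes $\pi$ from $\bar{\mathbf{h}}$ alone.

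For completeness I will also note that $\hat{p}$ is a genuine density: it is nonnegative, and it integrates to $\sum_d w_d(\mathcal{X}) = 1$ because $w(\mathcal{X}) \in \Delta^D$. I expect no real obstacle; the proof is a short calculation whose only substantive input is the query-independence of $w$ in the third step.
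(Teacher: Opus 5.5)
Your proposal is correct and takes the same route as the paper's proof sketch. You exchange the integral over $y_k$ with the finite mixture sum, apply marginalization consistency of each $\hat{p}_d$, and identify the result with the mixture at $\mathcal{Q}_{-k}$ because the weights $w_d(\mathcal{X})$ are unchanged. You are also slightly more precise than the paper's sketch, which attributes the legitimacy of the interchange to the query-independence of $w_d$. You correctly note that the interchange is just linearity, and that query-independence is needed only to match the weights at the reduced query.
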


For completeness, we recall the proof idea of \Cref{lem:moses-31}: separability makes the Jacobian of $f^{-1}$ diagonal, so its determinant factors across coordinates, the integral over $y_k$ collapses to an integral over $z_k$ via the change-of-variables theorem, and marginalization consistency of $\hat{p}_Z$ then yields the claim. \Cref{lem:moses-32} follows by interchanging the integral over $y_k$ with the finite sum over mixture components, which is permitted because $w_d$ does not depend on $\mathcal{Q}$. Full proofs are given in Appendices~A.1 and~A.2 of \citet{Yalavarthi2025.Reliable}.

\subsection{Main Result}\label{app:mc-main}

\begin{theorem}\label{thm:mc}
\model{} is marginalization consistent: for every observation history $\mathcal{X}$, every query set $\mathcal{Q}$, and every $n \in \{1, \dots, N\}$, the joint density predicted by \model{} satisfies~\eqref{eq:mc-def}.
\end{theorem}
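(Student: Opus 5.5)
The plan is to rewrite the \model{} joint density as a pushforward of a latent Gaussian mixture under a separable flow, and then invoke \Cref{lem:moses-32} followed by \Cref{lem:moses-31}. By \eqref{eq:logp}–\eqref{eq:copterm}, with $u_n = F_n(y_n)$ and $z_n = \cdfgmmn{n}^{-1}(u_n)$,
\begin{equation*}
\hat p(\mathbf{y}\mid\mathcal{Q},\mathcal{X}) \;=\; \pdfgmm(\mathbf{z})\prod_{n=1}^{N}\frac{f_n(y_n)}{\pdfgmmn{n}(z_n)}.
\end{equation*}
The map $y_n \mapsto z_n = \cdfgmmn{n}^{-1}(F_n(y_n))$ is strictly increasing and differentiable. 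Its derivative is $f_n(y_n)/\pdfgmmn{n}(z_n)$. Hence the product above is exactly the absolute Jacobian determinant of the coordinatewise map $\mathbf{y}\mapsto\mathbf{z}$. So $\hat p$ is the pushforward of $\pdfgmm$ under the inverse map $f(\mathbf{z}) = (F_n^{-1}\circ\cdfgmmn{n}(z_n))_{n}$.

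\textbf{Step 1: the latent mixture is consistent.} Each component $\mathcal{N}(\mathbf{z};\mu_j,\Sigma_j)$ is marginalization consistent, for the following reasons.
\begin{itemize}
\item The entries $[\mu_j]_n$ and $\sigma_{j,n}$ depend only on $\mathbf{e}_n$ by \eqref{eq:mu} and \eqref{eq:sigma}.
\item $\mathbf{e}_n$ depends only on $(\mathcal{Q}_n,\mathcal{X})$ by the query-separable encoder.
\item $\Sigma_{j,nm} = \sigma_{j,n}\sigma_{j,m}\,G_{j,nm}/\sqrt{G_{j,nn}G_{j,mm}}$, where $G_{j,nm} = u_{j,n}^\top u_{j,m} + \delta_{nm}$ and the rows $u_{j,n}$ are computed row-wise from $\mathbf{e}_n$. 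So this entry depends only on $\mathbf{e}_n$ and $\mathbf{e}_m$.
\end{itemize}
Therefore the parameters for $\mathcal{Q}_{-k}$ are obtained from those for $\mathcal{Q}$ by deleting entry $k$ and row/column $k$. The standard Gaussian marginalization rule then gives consistency. Since $\pi$ depends only on $\mathcal{X}$ through $\bar{\mathbf{h}}$, \Cref{lem:moses-32} yields that $\pdfgmm$ is marginalization consistent.

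\textbf{Step 2: the flow is separable.} The $n$-th coordinate map $\phi(z_n\mid\mathcal{Q}_n,\mathcal{X}) = F_n^{-1}(\cdfgmmn{n}(z_n))$ must depend only on $\mathcal{Q}_n$ and $\mathcal{X}$. This holds for $F_n$, since $\theta_n = \mathrm{MLP}_{\mathrm{DSF}}(\mathbf{e}_n)$ with the \margmodel{} encoder being query-separable. The subtle point is $\cdfgmmn{n}$. It is a one-dimensional Gaussian mixture CDF $\sum_j \pi_j\Phi((z-[\mu_j]_n)/\sigma_{j,n})$, so it depends only on $\pi$ (history only) and on $\mathbf{e}_n$. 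Crucially, it does not depend on other queries. Both maps are strictly monotone bijections between $\R$ and $(0,1)$: the mixture CDF because all $\sigma_{j,n}>0$, and the DSF by construction. Hence $\phi$ is invertible. \Cref{lem:moses-31} then gives marginalization consistency of $\hat p$, i.e.\ \eqref{eq:mc-def}.

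I expect the main obstacle to be the formal bookkeeping in Step 2. The flow's coordinate maps involve $\cdfgmmn{n}$, which is derived from the joint mixture, so one must argue explicitly that the univariate marginal of $\pdfgmm$ for $\mathcal{Q}$ equals that for $\mathcal{Q}_{-k}$. This is essentially Step 1 applied down to one coordinate. A second point to handle carefully is the regularity needed for the change of variables: the DSF has a positive derivative, and $\pdfgmmn{n}>0$ everywhere, which justifies the Jacobian identity above.
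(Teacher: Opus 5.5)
Your proposal is correct and follows essentially the same route as the paper's proof. Both write \model{} as the pushforward of the latent mixture $\pdfgmm$ under the separable map $z_n \mapsto F_n^{-1}(F_{\mathrm{GMM},n}(z_n))$, obtain consistency of each Gaussian component from the fact that $[\mu_j]_n$ depends only on $\mathbf{e}_n$ and $\Sigma_{j,nm}$ only on $\mathbf{e}_n,\mathbf{e}_m$, lift this to the mixture via \Cref{lem:moses-32} using that $\pi$ depends only on $\mathcal{X}$, and conclude with \Cref{lem:moses-31}. Your explicit Jacobian check that the copula density equals this pushforward, and your remark that $F_{\mathrm{GMM},n}$ depends on $\pi$ as well as on $\mathbf{e}_n$ (harmless, since $\pi$ is query-independent), are small refinements that the paper's wording glosses over.
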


\begin{proof}
The proof proceeds in three steps.

\paragraph{Step 1: \model{} is a query-separable transformation of a latent base density.}
The generative process of \model{} maps a latent vector $\mathbf{z}$, drawn from the latent Gaussian mixture base density $\pdfgmm(\mathbf{z} \mid \mathcal{Q}, \mathcal{X})$, to the target domain via the per-coordinate transformation
\begin{equation}
y_n \;=\; F_n^{-1}\bigl(F_{\mathrm{GMM}, n}(z_n)\bigr).
\end{equation}
Both $F_n$ and $F_{\mathrm{GMM}, n}$ are parameterized entirely by the per-query embedding $\mathbf{e}_n$, which by construction depends only on $\mathcal{X}$ and on the single query point $\mathcal{Q}_n = (t_n, c_n)$. Defining $\phi(z_n \mid \mathcal{Q}_n, \mathcal{X}) \coloneqq F_n^{-1}(F_{\mathrm{GMM}, n}(z_n))$, the joint transformation has the separable form required by \Cref{lem:moses-31}. By that lemma, marginalization consistency of \model{} reduces to marginalization consistency of the latent base density $\pdfgmm$.

\paragraph{Step 2: Each Gaussian component of the latent mixture is marginalization consistent.}
The latent base density is the Gaussian mixture
\begin{equation}
\pdfgmm(\mathbf{z} \mid \mathcal{Q}, \mathcal{X}) \;=\; \sum_{j=1}^{K} \pi_j \, \mathcal{N}(\mathbf{z}; \mu_j, \Sigma_j).
\end{equation}
As established in \Cref{sec:model}, the entry ${[\mu_j]}_n$ depends only on $\mathbf{e}_n$, and the entry $\Sigma_{j, nm}$ depends only on $\mathbf{e}_n$ and $\mathbf{e}_m$. Removing a subset of query points from $\mathcal{Q}$ therefore deletes the corresponding entries of $\mu_j$ and the corresponding rows and columns of $\Sigma_j$, leaving the remaining entries unchanged. This coincides with the standard marginalization rule for a multivariate Gaussian, so each component $\mathcal{N}(\mathbf{z}; \mu_j, \Sigma_j)$ is marginalization consistent in the sense of~\eqref{eq:mc-def}.

\paragraph{Step 3: The mixture weights are query-independent, so the latent GMM is marginalization consistent.}
The mixture weights $\pi = (\pi_1, \dots, \pi_K)$ are inferred from a global summary vector $\bar{\mathbf{h}}$ that is a function of the observation history $\mathcal{X}$ alone and is uninformed of the query $\mathcal{Q}$. Together with Step~2, the two hypotheses of \Cref{lem:moses-32} are satisfied, so $\pdfgmm$ is marginalization consistent.

Combining Steps~1 to 3, the latent base density of \model{} is marginalization consistent and is pushed forward to the target domain by a query-separable transformation. \Cref{lem:moses-31} then yields that the full joint density predicted by \model{} is marginalization consistent.
\end{proof}

\section{Numerical inversion of the GMM marginal CDF}\label{app:numerical-icdf}

This appendix describes the numerical procedure we use to evaluate
$z_n = \cdfgmmn{n}^{-1}(u_n)$ in the forward pass of the copula
training stage (\Cref{sec:model}).
The marginal CDF of a one-dimensional Gaussian mixture,
\begin{equation}
    \cdfgmmn{n}(z)
    \;=\; \sum_{j=1}^{K} \pi_j\,
    \Phi\!\left(\tfrac{z-\mu_{j,n}}{\sigma_{j,n}}\right),
    \label{eq:gmm-cdf}
\end{equation}
is strictly increasing and smooth in $z$, but its inverse has no closed
form. We therefore solve $\cdfgmmn{n}(z) = u$ for $z$ numerically.
The key requirements are (i) batched evaluation across all query points
and mini-batch elements simultaneously on GPU, (ii) robustness in the
extreme tails where the PDF is near zero, and (iii) sufficient accuracy
so that the analytic gradients of \Cref{prop:icdf-grad}, which are
evaluated at the returned $z$, are reliable.
A pure Newton iteration meets (i) and (iii) when initialized well, but
can overshoot or stall in the tails; a pure bisection is robust but
converges only linearly. We combine the two into a
\emph{safeguarded Newton-bisection} solver, which is a standard choice
for monotone scalar root finding~\citep{Press2007.Numerical}.

\paragraph{Initial bracket.}
Because each component is Gaussian, the probability mass of
$\cdfgmmn{n}$ outside the interval
\begin{equation}
    \big[\,\min_j \mu_{j,n} - 10\,\max_j \sigma_{j,n},\;
          \max_j \mu_{j,n} + 10\,\max_j \sigma_{j,n}\,\big]
    \label{eq:icdf-bracket}
\end{equation}
is below $\Phi(-10) \approx 7.6 \times 10^{-24}$, well under
floating-point precision. We use~\eqref{eq:icdf-bracket} as the initial
bracket $[\textsc{low},\textsc{high}]$ for every $u\in(0,1)$ and
initialize the iterate at its midpoint $x \leftarrow (\textsc{low}+\textsc{high})/2$.
Throughout the iteration we maintain the invariant
$\cdfgmmn{n}(\textsc{low}) \le u \le \cdfgmmn{n}(\textsc{high})$,
so the true root is always contained in the bracket.

\paragraph{Iteration.}
At each step we evaluate the CDF and PDF at the current iterate $x$,
both available in closed form from the GMM parameters, and perform the
following updates, all batched along the query and mini-batch axes:
\begin{enumerate}
\item \emph{Bracket update.}
For each element, if $\cdfgmmn{n}(x) < u$ then the root lies above
$x$, so we set $\textsc{low} \leftarrow x$; otherwise we set
$\textsc{high} \leftarrow x$. This preserves the bracketing invariant.

\item \emph{Newton step.}
Using $F'(z) = \pdfgmmn{n}(z)$, the Newton update is
\begin{equation*}
    x_{\text{new}}
    \;=\; x \;-\;
    \frac{\cdfgmmn{n}(x) - u}{\pdfgmmn{n}(x)}.
\end{equation*}
The denominator is clamped from below by $10^{-8}$ to avoid division
by zero in regions where the mixture density is numerically negligible.

\item \emph{Safeguard.}
We accept the Newton step only if it falls inside the current bracket,
\(x_{\text{new}}\in[\textsc{low},\textsc{high}]\).
Otherwise (the Newton step overshot, or the local gradient was too
flat for Newton to be informative), we fall back to a bisection step
$x \leftarrow (\textsc{low}+\textsc{high})/2$ for that element.
Combining the two rules in a single \texttt{torch.where} keeps the
update fully vectorized: every element either takes the Newton step
or a bisection step, depending on whether its Newton proposal was
in-bracket.
\end{enumerate}

\paragraph{Convergence.}
Inside the bracket, Newton's method on a smooth, strictly increasing
$F$ converges quadratically, while the bisection fallback contracts
the bracket by a factor of two whenever Newton would have left it.
The number of iterations is fixed (we use the same budget for every
element of the batch, which keeps the kernel launches static and the
whole loop compilable), and in practice a small number of steps is
enough to drive $|\cdfgmmn{n}(x) - u|$ below the precision required by
the downstream gradients. Crucially, because the gradients of
$\cdfgmmn{n}^{-1}$ are obtained \emph{analytically} via implicit
differentiation (\Cref{app:icdf-grad}), we do not need to backpropagate
through the iterations: the loop is wrapped in a \texttt{no\_grad}
context, and only the converged $z$ is fed into the closed-form
expressions of \Cref{prop:icdf-grad}.

\paragraph{Why this matters for training.}
Treating the solver as a black box that returns $z$ and pairing it
with the analytic gradients of \Cref{prop:icdf-grad} has three
consequences. First, the memory footprint of the backward pass is
independent of the number of Newton iterations, since none of the
intermediate iterates are retained on the autograd tape.
Second, the safeguarded update means the solver does not diverge in
the tails, where the GMM density is small and a naive Newton step
would otherwise overshoot far outside the support of the marginals.
Third, because the bracket~\eqref{eq:icdf-bracket} is constructed
directly from the predicted GMM parameters, no dataset-dependent
hyperparameters are introduced.

\section{Derivation of the analytic iCDF gradients}\label{app:icdf-grad}

This appendix is intended to be self-contained.
Our goal is to compute, by hand, the gradients of the marginal inverse CDF
$z_n = F_{\theta,n}^{-1}(u_n)$ with respect to the mixture parameters $\theta$.
We do this because the inverse CDF has \emph{no closed form}: at training
time, $z_n$ is obtained numerically (via Newton's method).
Naively back-propagating through that solver would require unrolling all
of its iterations, which is expensive and numerically fragile.
The classical alternative,
\emph{implicit differentiation}, sidesteps the solver entirely: it expresses
the gradient of $z_n$ directly in terms of quantities we already have at
the converged solution.
The result is a handful of simple algebraic
expressions, given in Proposition~\ref{prop:icdf-grad} of
Section~\ref{sec:model}.
The derivation below shows where each of those expressions comes from.

\paragraph{What does the inverse CDF do?}
At a high level, $F^{-1}_\theta$ takes a number $u\in(0,1)$, interpreted as
a probability, and returns the value $z$ at which a Gaussian-mixture
random variable accumulates exactly that much probability mass below it.
That is, $z$ is defined implicitly by
\begin{equation*}
    F_\theta(z) = u,\qquad
    F_\theta(z) = \Pr[Z \le z\mid \theta].
\end{equation*}
There is no algebraic formula that solves this equation for a general
mixture, so we use Newton's method to find $z$ numerically.
\emph{But} once $z$ is found, the equation $F_\theta(z) = u$ is an
\emph{exact identity} that we can differentiate symbolically.
That is the whole trick.

\paragraph{Which partials do we need?}
The pseudo-observation $u_n$ is treated as an input to the inverse CDF
and is upstream of the mixture parameters; in our backward pass we do
not need a gradient with respect to it.
The mixture parameters $\theta = (\pi_j,\mu_{j,n},\sigma_{j,n}^2)$ are
predicted by the marginal network, so the partial derivatives we
actually need are
\(
\partial z_n/\partial \pi_j,\;
\partial z_n/\partial \mu_{j,n},\;
\partial z_n/\partial \sigma_{j,n}^2.
\)
This restriction makes the derivation a bit cleaner, since $u$ is
treated as a constant whenever we differentiate the defining identity.

Throughout the derivation $n$ is fixed and we drop the subscript $n$
wherever doing so does not cause confusion: write $z = z_n$,
$u = u_n$, $\mu_j = \mu_{j,n}$, $\sigma_j = \sigma_{j,n}$,
$\sigma_j^2 = \sigma_{j,n}^2$.
The marginal CDF and PDF are then
\begin{equation}
    F_\theta(z)
    \;=\;
    \sum_{j=1}^{K} \pi_j\,\Phi\!\big(t_j(z)\big),
    \qquad
    t_j(z) \;:=\; \frac{z-\mu_j}{\sigma_j},
    \label{eq:app-cdf}
\end{equation}
\begin{equation}
    f_\theta(z)
    \;=\;
    \frac{d F_\theta}{d z}(z)
    \;=\;
    \sum_{j=1}^{K} \frac{\pi_j}{\sigma_j}\,
    \varphi \big(t_j(z)\big),
    \label{eq:app-pdf}
\end{equation}
where $\Phi$ and $\varphi$ are the standard-normal CDF and PDF\@.
The variable $t_j(z)$ is just the standardized distance of $z$ from the
centre of component $j$, measured in units of that component's standard
deviation, this notation will keep the algebra below compact.

\paragraph{A note on the simplex constraint.}
The mixture weights satisfy $\pi_j > 0$ and $\sum_j \pi_j = 1$, but
\textbf{we do not enforce the simplex constraint inside this
derivation}: we treat each $\pi_j$ as a free positive scalar
in~\eqref{eq:app-cdf}, take partial derivatives with the other $\pi_{j'}$
held fixed, and let the softmax (or whichever projection produces $\pi$)
handle the constraint outside.
This is exactly what the code does: PyTorch's autograd composes the
softmax Jacobian with $\partial z/\partial \pi_j$ during back-propagation,
so working with unconstrained $\pi_j$ here gives the correct end-to-end
gradient once it is composed with the constraint-respecting layer.

\subsection{Setup: the implicit equation and its differential}

The forward pass solves
\begin{equation}
    F_\theta(z)\;=\;u
    \label{eq:app-implicit}
\end{equation}
for $z=z(\theta,u)$.
Equation~\eqref{eq:app-implicit} is the
\emph{defining identity}: at the converged Newton iterate (up to the
chosen tolerance, which we take to be machine precision in float32) it
holds as a true equality.
This is the only place in the derivation where we use that Newton's method
returned a true root: we needed~\eqref{eq:app-implicit} to be an
identity in order to differentiate it.

The trick of implicit differentiation is to take the derivative of
\emph{both} sides of an identity and read off the gradient of the
implicit quantity (here $z$) without ever solving the equation symbolically.
Concretely, we differentiate~\eqref{eq:app-implicit} with respect to a
generic mixture parameter $\xi \in \{\pi_j, \mu_j, \sigma_j^2\}$.
Since $u$ does not depend on $\xi$, the right-hand side is zero:
\begin{align}
    \frac{\partial}{\partial \xi}\,F_\theta\!\big(z(\theta,u)\big)
    &\;=\; 0,\nonumber\\
    \underbrace{\frac{\partial F_\theta}{\partial z}(z)}_{f_\theta(z)}\,
    \frac{\partial z}{\partial \xi}
    \;+\;
    \frac{\partial F_\theta}{\partial \xi}\bigg|_{z\,\text{fixed}}
    &\;=\; 0.
    \label{eq:app-master}
\end{align}
The second line uses the chain rule.
The variable $\xi$ enters $F_\theta(z)$ in two ways: through the
\emph{explicit} dependence of $F_\theta$ on its parameters $\theta$
\emph{at a fixed argument $z$}, and through the \emph{implicit}
dependence of the argument $z$ itself on $\xi$ (because changing
$\theta$ changes the root of~\eqref{eq:app-implicit}).
The first term collects the implicit channel; the second term collects
the explicit channel.
Solving~\eqref{eq:app-master} for $\partial z/\partial\xi$ gives the
master formula we will specialize three times:
\begin{equation}
    \boxed{\;
    \frac{\partial z}{\partial \xi}
    \;=\;
    -\,\frac{1}{f_\theta(z)}\,
    \frac{\partial F_\theta}{\partial \xi}\bigg|_{z\,\text{fixed}}.\;}
    \label{eq:app-master-solved}
\end{equation}
Two remarks make this formula useful in practice.
First, the denominator $f_\theta(z)>0$ everywhere, because each Gaussian
component has full support on the real line, so the formula is
well-defined and never blows up in finite precision provided the
$\sigma_j$ are bounded away from zero (which is enforced by the
softplus parametrization in our network).
Second, every term on the right-hand side is something we already know
how to compute at the converged $z$, the value of the mixture PDF and
the explicit partial of the mixture CDF, both available in closed
form.
So this single line replaces the entire backward pass through the
Newton solver.

\subsection{Specializing for each parameter}

We now plug each of the three mixture parameters
into~\eqref{eq:app-master-solved}.
The pattern is always the same: compute the explicit partial of
$F_\theta$ at fixed $z$ and divide by $-f_\theta(z)$.

\paragraph{(a) $\xi = \pi_j$ (gradient with respect to a mixture weight).}
Looking at~\eqref{eq:app-cdf}, only the $j$-th term in the sum
$\sum_{j'} \pi_{j'}\Phi(t_{j'}(z))$ contains $\pi_j$, and it is linear in
$\pi_j$.
The explicit partial of $F_\theta$ at fixed $z$ therefore reads off
immediately:
\begin{equation}
    \frac{\partial F_\theta}{\partial \pi_j}\bigg|_{z\,\text{fixed}}
    \;=\;
    \Phi\!\big(t_j(z)\big).
    \label{eq:app-Fpi}
\end{equation}
Substituting~\eqref{eq:app-Fpi} into the master formula,
\begin{equation}
    \frac{\partial z}{\partial \pi_j}
    \;=\;
    -\,\frac{\Phi\!\big(t_j(z)\big)}{f_\theta(z)}.
    \label{eq:app-grad-pi}
\end{equation}
The minus sign is the natural one.
$\Phi(t_j(z)) \in (0,1)$ is the contribution of component $j$ to the
total probability mass at $z$, so increasing $\pi_j$ by a small amount
pushes more probability to the left of $z$.
To keep $F_\theta(z) = u$ unchanged we have to compensate by sliding
$z$ leftwards, hence the negative sign.
The magnitude of the response is set by how peaked the overall
density is near $z$ (the $1/f_\theta(z)$ factor): in flat regions a
small change in mass requires a large displacement of $z$, in peaked
regions a tiny one.

\paragraph{(b) $\xi = \mu_j$ (gradient with respect to a mean).}
From~\eqref{eq:app-cdf}, only the $j$-th term depends on $\mu_j$, and
$\partial t_j(z)/\partial \mu_j = -1/\sigma_j$ (with $z$ held fixed,
because $t_j$ depends on $\mu_j$ only through the numerator
$z-\mu_j$).
The chain rule, together with $\Phi'=\varphi$, gives
\begin{equation}
    \frac{\partial F_\theta}{\partial \mu_j}\bigg|_{z\,\text{fixed}}
    \;=\;
    \pi_j\,\varphi\!\big(t_j(z)\big)\,
    \frac{\partial t_j(z)}{\partial \mu_j}
    \;=\;
    -\,\frac{\pi_j}{\sigma_j}\,\varphi\!\big(t_j(z)\big).
    \label{eq:app-Fmu}
\end{equation}
Plugging~\eqref{eq:app-Fmu} into~\eqref{eq:app-master-solved} cancels
the minus sign in the master formula and yields
\begin{equation}
    \frac{\partial z}{\partial \mu_j}
    \;=\;
    \frac{\pi_j\,\varphi\!\big(t_j(z)\big)/\sigma_j}{f_\theta(z)}.
    \label{eq:app-grad-mu}
\end{equation}
Intuitively, shifting the $j$-th component to the right (increasing
$\mu_j$) drags its probability mass to the right, so the cumulative
mass below any fixed $z$ \emph{decreases}.
To restore $F_\theta(z)=u$ we have to move $z$ rightwards as well,
matching the positive sign of the gradient.
The size of the response is governed by how much component $j$
contributes \emph{at $z$}, namely $\pi_j\varphi(t_j(z))/\sigma_j$,
divided by the total mixture density at $z$.
Components that are far from $z$ exert almost no influence, because
$\varphi(t_j(z))$ is exponentially small.

\paragraph{(c) $\xi = \sigma_j^2$ (gradient with respect to a variance).}
The slightly more delicate piece is $\partial t_j(z)/\partial \sigma_j^2$.
Differentiating $t_j(z) = (z-\mu_j)\,\sigma_j^{-1}$ at fixed $z$ and
using the chain rule for $\sigma_j^{-1}$ as a function of $\sigma_j^2$,
\begin{equation}
    \frac{\partial t_j(z)}{\partial \sigma_j^2}
    \;=\;
    (z-\mu_j)\,\frac{d \sigma_j^{-1}}{d\sigma_j^2}
    \;=\;
    (z-\mu_j)\,\Big(-\tfrac12\,\sigma_j^{-3}\Big)
    \;=\;
    -\,\frac{z-\mu_j}{2\,\sigma_j^3}
    \;=\;
    -\,\frac{t_j(z)}{2\,\sigma_j^2},
    \label{eq:app-dtsigma2}
\end{equation}
where in the last step we used $(z-\mu_j)/\sigma_j = t_j(z)$ to fold the
prefactor back into $t_j$ and keep the formula compact.
Hence
\begin{align}
    \frac{\partial F_\theta}{\partial \sigma_j^2}\bigg|_{z\,\text{fixed}}
    &\;=\;
    \pi_j\,\varphi\!\big(t_j(z)\big)\,
    \frac{\partial t_j(z)}{\partial \sigma_j^2}\nonumber\\
    &\;=\;
    -\,\frac{\pi_j\,\varphi\!\big(t_j(z)\big)\,t_j(z)}{2\,\sigma_j^2}.
    \label{eq:app-Fsigma2}
\end{align}
Substituting~\eqref{eq:app-Fsigma2} into the master formula, the minus
sign once more cancels and we get
\begin{equation}
    \frac{\partial z}{\partial \sigma_j^2}
    \;=\;
    \frac{\pi_j\,\varphi\!\big(t_j(z)\big)\,t_j(z)}{2\,\sigma_j^2\,f_\theta(z)}
    \;=\;
    \frac{\pi_j\,\varphi\!\big(t_j(z)\big)\,(z-\mu_j)/\sigma_j}{2\,\sigma_j^2\,f_\theta(z)}.
    \label{eq:app-grad-sigma2}
\end{equation}

\paragraph{Summary.}
We started from a single defining identity, $F_\theta(z) = u$, and
differentiated it three times to read off how $z$ responds to each
mixture parameter.
Each gradient ended up as a small, closed-form expression involving
only the mixture PDF $f_\theta(z)$, the standard-normal CDF $\Phi$,
the standard-normal PDF $\varphi$, and the standardized residual
$t_j(z)$, all of which are computed once during the forward pass.
The cost of the backward pass through the inverse CDF is therefore
the cost of one forward evaluation, independent of how many Newton
iterations the forward pass took.
This is the key practical payoff of implicit differentiation in our
setting.

\section{Experimental Details}\label{app:exp-details}

This appendix details the data-splitting protocol, training configuration, and hyperparameter selection used in our experiments. We use the four benchmark datasets (USHCN, PhysioNet'12, MIMIC-III, MIMIC-IV) following the preprocessing protocols described by \citet{Yalavarthi2025.Reliable}, including the binning intervals, observation horizons, and channel selections reported there. We refer the reader to that work for the full dataset characteristics; here we focus on the protocol that governs how we train, evaluate, our model and its baselines.

\subsection{Dataset Statistics}\label{app:dataset-stats}

For convenience we summarize the four datasets in \Cref{tab:dataset-summary}, including the per-instance query-count statistics (minimum, average, maximum). USHCN \citep{Menne2016.LongTerm} is a climatological dataset of daily measurements from 1{,}100 U.S. weather stations over four years (1996--2000), tracking five variables (snow precipitation, rain precipitation, snow depth, minimum temperature, maximum temperature). Following \citet{DeBrouwer2019.GRUODEBayes}, we transform USHCN into an IMTS by randomly dropping $95\%$ of the recorded observations. 
PhysioNet'12 \citep{Silva2012.Predicting} contains ICU records of $12{,}000$ patients monitored over a 48-hour period across 37 vital signs, binned into hourly intervals \citep{Che2018.Recurrent}. 
MIMIC-III \citep{Johnson2016.MIMICIII} provides ICU stays from Beth Israel Deaconess Medical Center, with 96 variables aggregated into 30-minute bins \citep{DeBrouwer2019.GRUODEBayes}.
MIMIC-IV \citep{Johnson2023.MIMICIV} records ICU stays from a tertiary academic medical center in Boston with 102 variables binned at the 1-minute resolution \citep{Bilos2021.Neural}; this fine-grained binning yields the largest number of observations and queries among the four datasets.

\begin{table}[ht]
    \centering
    \caption{Summary of dataset characteristics and preprocessing. The last three columns report the minimum, average, and maximum number of forecasting query points $N$ per instance.}\label{tab:dataset-summary}
    \begin{tabular}{lccccccc}
        \toprule
        & & & & & \multicolumn{3}{c}{Query points $N$} \\
        \cmidrule(lr){6-8}
        Dataset & Samples & Channels & Duration & Binning & $\min$ & $\bar{N}$ & $\max$ \\
        \midrule
        USHCN          & 1{,}100 (Stations)  & 5   & 4 Years   & 1 Week   &  3 & 3.3  & 6  \\
        PhysioNet'12   & 12{,}000 (Patients) & 37  & 48 Hours  & 1 Hour   &  1 & 19.8 & 53 \\
        MIMIC-III      & 21{,}000 (Patients) & 96  & 48 Hours  & 30 Mins  &  1 & 10.3 & 85 \\
        MIMIC-IV       & 18{,}000 (Patients) & 102 & 48 Hours  & 1 Min    &  1 & 7.8  & 79 \\
        \bottomrule
    \end{tabular}
\end{table}

\subsection{Data Splits}\label{app:splits}

We adopt the splitting protocol of \citet{Yalavarthi2025.Reliable} without modification, both to ensure a fair head-to-head comparison with previously reported numbers and to remove any degree of freedom that could be tuned in our favor. Each dataset is partitioned into training, validation, and test sets in a $70{:}20{:}10$ ratio. We repeat this partitioning under five distinct random seeds. We use the same seeds as experiments done in previous work~\citep{Yalavarthi2025.Reliable,Kloetergens2026.Probabilistic}. The test seed is tied to the fold index (e.g., Split~1 uses Seed~1), so the same five test sets are used by every model in our experiments. All reported numbers are means and standard deviations across these five folds.

\paragraph{Interpreting the standard deviation.} The five folds differ in which forecasting windows they expose to the model, and some folds contain inherently harder forecasting horizons than others (e.g., a higher proportion of unobserved channels at prediction time, or query points that fall further from the conditioning window). Consequently, the per-fold scores of \emph{every} model on a given dataset shift up or down together as the fold becomes harder or easier, which inflates the across-fold standard deviation relative to the much smaller within-fold gap between models. The relevant comparison is therefore the gap between two models on the same fold, aggregated across folds, rather than the absolute magnitude of either model's standard deviation. As a rule of thumb, when a model outperforms a baseline by approximately one reported standard deviation in this protocol, the improvement is consistent across folds and statistically meaningful.

\subsection{Hyperparameter Selection}\label{app:hyperparams}

For each model and dataset, we draw $10$ hyperparameter configurations from the search space below using random search, train each configuration on fold 0, and select the configuration with the best validation njNLL.\@ The selected configuration is then retrained from scratch on all five folds; the test numbers we report come from this retrained configuration. We use the same procedure and the same search-space sizes for all baselines, taking their search spaces from the corresponding original publications. This keeps the per-model tuning budget identical across the comparison.

\Cref{tab:hp-margflow,tab:hp-copfiti,tab:hp-tactis2} list the search spaces for \margmodel{}, \model{}, and our re-implemented TACTiS-2 baseline (see \Cref{app:tactis-adapt}). \model{} and TACTiS-2 share the same encoder, so the encoder-side hyperparameters (\texttt{copula\_n\_heads}, \texttt{copula\_hidden\_dim}) are drawn from identical grids; the remaining hyperparameters are specific to each copula module.

\begin{table}[h]
    \centering
    \caption{Hyperparameter search space for \margmodel{}, the marginal model used as the first stage of \model{} and TACTiS-2.}\label{tab:hp-margflow}
    \begin{tabular}{lp{0.5\linewidth}l}
        \toprule
        Hyperparameter & Description & Values \\
        \midrule
        \texttt{marg\_weight\_decay} & Weight decay of the marginal encoder       & $\{10^{-4}, 10^{-3}\}$ \\
        \texttt{marg\_n\_heads}      & Attention heads of the marginal encoder    & $\{1, 2, 4\}$ \\
        \texttt{marg\_hidden\_dim}   & Hidden dim of the marginal encoder         & $\{32, 64, 128\}$ \\
        \texttt{flow\_mlp\_layers}   & MLP layers of the conditioner network      & $\{1, 2, 3\}$ \\
        \texttt{flow\_mlp\_dim}      & MLP hidden dim of the conditioner network  & $\{32, 64, 128\}$ \\
        \texttt{flow\_layers}        & Number of normalizing-flow layers          & $\{1, 2, 3\}$ \\
        \texttt{flow\_hid\_dim}      & Hidden dim of each normalizing-flow layer  & $\{10, 20\}$ \\
        \bottomrule
    \end{tabular}
\end{table}

\begin{table}[h]
    \centering
    \caption{Hyperparameter search space for \model{}.}\label{tab:hp-copfiti}
    \begin{tabular}{lp{0.5\linewidth}l}
        \toprule
        Hyperparameter & Description & Values \\
        \midrule
        \texttt{copula\_components}    & Number of mixture components             & $\{1, 3, 5, 7, 10\}$ \\
        \texttt{copula\_n\_heads}      & Attention heads of the encoder           & $\{1, 2, 4\}$ \\
        \texttt{copula\_hidden\_dim}   & Hidden dim of the encoder                & $\{32, 64, 128\}$ \\
        \texttt{corr\_net\_hidden\_dim}& Hidden dim of the MLPs producing $\mu$, $\sigma$, and the Gram-matrix vectors & $\{16, 32, 64, 128\}$ \\
        \bottomrule
    \end{tabular}
\end{table}

\begin{table}[h]
    \centering
    \caption{Hyperparameter search space for our re-implemented TACTiS-2 baseline. The copula-side ranges match those reported by \citet{Ashok2023.TACTiS2}; the encoder-side ranges (\texttt{copula\_hidden\_dim}, \texttt{copula\_n\_heads}) match \model{}.}\label{tab:hp-tactis2}
    \begin{tabular}{lll}
        \toprule
        Hyperparameter & Description & Values \\
        \midrule
        \texttt{copula\_attn\_heads}   & Attention heads of the attentional copula  & $\{2, 4, 8\}$ \\
        \texttt{copula\_attn\_layers}  & Attention layers of the attentional copula & $\{1, 2, 3, 4\}$ \\
        \texttt{copula\_attn\_dim}     & Attention dim of the attentional copula    & $\{8, 16, 32\}$ \\
        \texttt{copula\_mlp\_layers}   & MLP layers of the attentional copula       & $\{1, 2, 3\}$ \\
        \texttt{copula\_mlp\_dim}      & MLP hidden dim of the attentional copula   & $\{32, 64, 128\}$ \\
        \texttt{copula\_resolution}    & CDF discretization resolution              & $\{8, 16, 32\}$ \\
        \texttt{copula\_dropout}       & Dropout rate of the attentional copula     & $\{0.0, 0.1, 0.2\}$ \\
        \texttt{copula\_hidden\_dim}   & Hidden dim of the encoder                  & $\{32, 64, 128\}$ \\
        \texttt{copula\_n\_heads}      & Attention heads of the encoder             & $\{1, 2, 4\}$ \\
        \bottomrule
    \end{tabular}
\end{table}

\subsection{Training Setup}\label{app:training}

All models are trained with the AdamW optimizer \citep{Kingma2017.Adam,Loshchilov2019.Decoupled} using an initial learning rate of $10^{-3}$, weight decay of $10^{-3}$, and a batch size of $64$. We schedule the learning rate with reduce-on-plateau (factor $0.5$, patience of $5$ validation epochs without improvement) and train for at most $2000$ epochs with early stopping (patience of $30$ epochs on the validation njNLL). All experiments use float32 precision. For the MIMIC-IV dataset we reduce the batch size to $32$ when memory pressure requires it; we apply this reduction uniformly across all models on that dataset. We train on a single NVIDIA A100 GPU per fold.

\subsection{Evaluation Metrics}\label{app:metrics}

We evaluate every model on the held-out test split of each fold using the normalized joint negative log-likelihood (njNLL) and the marginal negative log-likelihood (mNLL). The njNLL is defined in~\eqref{eq:njnll}; the mNLL evaluates the predictive density at each query coordinate independently and averages over query points,
\begin{equation}
    \mathrm{mNLL}(\mathbf{y} \mid \mathcal{Q}, \mathcal{X})
    \;=\;
    \frac{1}{N}\sum_{n=1}^{N} -\log p\bigl(y_n \,\big|\, (t_n^{\mathrm{qry}}, c_n^{\mathrm{qry}}), \mathcal{X}\bigr).
    \label{eq:mnll}
\end{equation}
We report the mean and standard deviation of both metrics across the five folds.\@ njNLL is the primary metric (it is also the training objective) and mNLL serves as a diagnostic for marginalization consistency: a model that is internally consistent should not exhibit a large gap between its joint and marginal scores.

\subsection{The \texorpdfstring{\model{}}{CopFITi}-Mix-GC Ablation}\label{app:mix-gc}

Recall from \Cref{sec:gmc} that a GM-C is induced by a single multi-modal base density $\pdfgmm(\mathbf{z}) = \sum_{j=1}^{K} \pi_j\, \mathcal{N}(\mathbf{z};\mu_j,\Sigma_j)$ on $\R^N$ via~\eqref{eq:copula} using the GMM's own marginal CDFs. A Mix-GC instead is a convex combination of $K$ independent Gaussian copulas,
\begin{equation}\label{eq:mix-gc}
    \cop(u_1, \dots, u_N) \;=\; \sum_{j=1}^{K} \pi_j\, \cop_{R_j}(u_1, \dots, u_N),
\end{equation}
where $\cop_{R_j}$ is the Gaussian copula induced by a correlation matrix $R_j$. Each Gaussian copula component is centered at the origin in latent space and uses standard-normal marginals, so the per-component means $\mu_j$ and standard deviations $\sigma_{j,n}$ that appear in the GM-C construction are no longer present, only the off-diagonal correlation structure remains.

\paragraph{Architectural changes.}
The \model{}-Mix-GC ablation reuses every component of \model{} described in \Cref{sec:model} unchanged, except for the parts that produce the per-component means and standard deviations. Concretely:
\begin{itemize}
    \item The encoder, the global summary $\bar{\mathbf{h}}$, and the per-query embeddings $\mathbf{e}_n$ are identical to \model{}.
    \item The mixture-weight head $\pi = \mathrm{softmax}(\mathrm{MLP}_\pi(\bar{\mathbf{h}}))$ is identical to \model{}.
    \item The mean head $\mathrm{MLP}_\mu$ (\eqref{eq:mu}) and the standard-deviation head $\mathrm{MLP}_\sigma$ (\eqref{eq:sigma}) are \emph{removed}: each Gaussian copula component is centered at zero with unit variance by definition, so $\mu_{j,n}$ and $\sigma_{j,n}$ are not free parameters.
    \item The correlation network that produces the per-component feature matrices $U_j \in \R^{N \times H}$ is unchanged. We form the Gram matrices $G_j = U_j U_j^\top + I$ and normalize them to correlation matrices $R_j = D_{G_j}^{-1/2}\, G_j\, D_{G_j}^{-1/2}$ exactly as in \model{}, but stop there: $R_j$ is now the full per-component covariance, not just its correlation structure.
\end{itemize}
The hyperparameter search space therefore matches that of \model{} (\Cref{tab:hp-copfiti}) with the \texttt{corr\_net\_hidden\_dim} entry reinterpreted as the hidden dimension of the only remaining MLP.\@

\subsection{Univariate Wasserstein-1 Distance}\label{app:wd}

The Wasserstein distances reported in \Cref{tab:wd} are per-dimension univariate Wasserstein-1 distances between $K=1000$ samples drawn from the model and the corresponding ground-truth targets, averaged over the forecasting dimensions of each instance and then over the test set, and finally averaged over the five splits. We compare two model setups: \margmodel{} alone, and the full \margmodel{}+copula setup, either \model{} or TACTiS-2. The Control row quantifies the irreducible sampling error: it is computed by drawing two independent sets of $K=1000$ samples from \margmodel{} under different random seeds and applying the same estimator to those two sample sets.
For two probability measures $P$ and $Q$ on $\R$ with cumulative distribution functions $F_P$ and $F_Q$, the Wasserstein-1 distance admits the closed form
\begin{equation}
W_1(P, Q) \;=\; \int_{\R} \bigl|F_P(t) - F_Q(t)\bigr|\, dt.
\end{equation}
Given two equal-sized samples ${\{a_i\}}_{i=1}^{K}$ and ${\{b_i\}}_{i=1}^{K}$ drawn from $P$ and $Q$, the empirical Wasserstein-1 distance reduces to the average absolute gap between order statistics,
\begin{equation}
\widehat{W}_1\bigl(\{a_i\}, \{b_i\}\bigr) \;=\; \frac{1}{K} \sum_{i=1}^{K} \bigl|a_{(i)} - b_{(i)}\bigr|,
\end{equation}
where $a_{(1)} \le \dots \le a_{(K)}$ and $b_{(1)} \le \dots \le b_{(K)}$ denote the sorted samples. For each forecasting dimension $d$ of an instance, we apply this estimator directly to the $K=1000$ model samples $\{y_{d,i}^{\mathrm{model}}\}$ and the corresponding ground-truth values $\{y_{d,i}^{\mathrm{GT}}\}$, and average the resulting per-dimension distances across dimensions, instances, and splits.

\subsection{Adapting TACTiS-2 to sparse IMTS}\label{app:tactis-adapt}

The natural copula baseline for \model{} is the attentional copula introduced in TACTiS~\citep{Drouin2022.TACTiS} and reused in TACTiS-2~\citep{Ashok2023.TACTiS2}. The two methods share exactly the same attentional copula module; what changes between them is the training scheme. In the original TACTiS, a single shared encoder feeds both the marginals and the copula, and all components are trained jointly under the joint negative log-likelihood. In TACTiS-2, the marginals and the copula are produced by two separate encoders and trained in two stages: the marginals are trained first, then frozen, and the copula is trained on top of the frozen marginals. \citet{Ashok2023.TACTiS2} report that this two-stage protocol consistently and substantially outperforms the joint TACTiS training across all of their tasks. The two-stage protocol is also exactly the protocol used by \model{}, so comparing against the TACTiS-2 form of the attentional copula gives the most direct copula-vs-copula comparison.

One ingredient of the original TACTiS-2 does not transfer to our setting unchanged: its encoder. The TACTiS-2 encoder was designed for moderately irregular but still relatively dense multivariate time series, in which every channel is observed at most timestamps. The IMTS benchmarks we use are far sparser, with channels that may be observed only a handful of times over the whole observation window, and the original TACTiS-2 encoder underperforms badly in this regime.

To isolate the contribution of the copula module itself, we therefore re-implement the attentional copula on top of the same encoder that \model{} uses (described in \Cref{sec:marg}). Concretely, our TACTiS-2 baseline keeps the original attentional copula and the original two-stage training protocol, and only swaps the encoder for one that is suited to sparse IMTS\@. This is also the configuration in which it performs best on our benchmarks, so we use it everywhere we refer to TACTiS-2 in the main paper.

\newpage
\section{Sensitivity Analysis}\label{app:sensitivity}

We study the sensitivity of \model{} to the two hyperparameters that are specific to the GM-C copula module: the number of mixture components $K$ and the per-component correlation-network hidden dimension $H$ (i.e.\ the width of the MLPs that emit $\mu_j$, $\sigma_j$, and the rows of the Gram-matrix factors $U_j \in \R^{N \times H}$, see \Cref{tab:hp-copfiti}). For each sweep, we vary the analyzed hyperparameter while fixing all remaining hyperparameters to the values selected by the random search described in \Cref{app:hyperparams}. Each configuration is trained from scratch on all five splits, and we report the mean test njNLL across the five splits.

\paragraph{Number of mixture components $K$.} \Cref{fig:sensitivity-k} sweeps $K \in \{1, 3, 5, 7, 10\}$ on all four datasets. The case $K = 1$ collapses the GM-C to a single non-zero-mean Gaussian copula and is clearly inferior on every dataset. Moving from $K = 1$ to $K = 3$ produces a large drop in njNLL on all four benchmarks, and the curve essentially flattens beyond $K \ge 3$. This pattern is consistent with the expressivity argument in \Cref{sec:gmc}: a single Gaussian copula cannot represent multimodal dependence.

\begin{figure}[h]
    \centering
    \includegraphics[width=\linewidth]{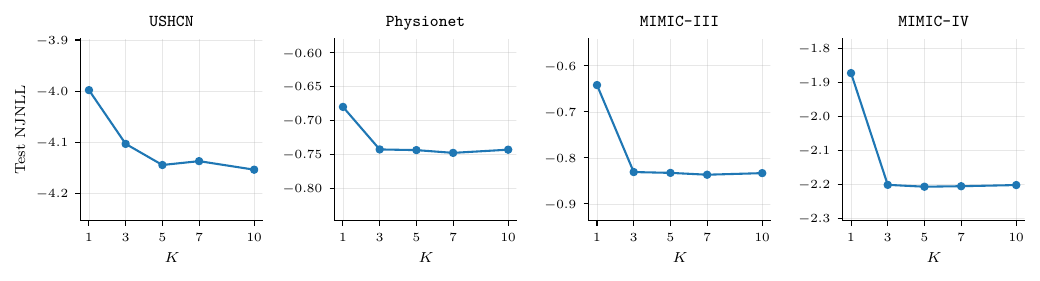}
    \caption{Sensitivity of \model{} to the number of mixture components $K$. Test njNLL (lower is better) averaged over the five splits, with all remaining hyperparameters fixed to the values selected by the random search of \Cref{app:hyperparams}.}\label{fig:sensitivity-k}
\end{figure}

\paragraph{Correlation-network hidden dimension $H$.} \Cref{fig:sensitivity-h} sweeps $H \in \{16, 32, 64, 128\}$. Across all four datasets the test njNLL is essentially flat with respect to $H$. The only visible deviation is a small uptick on USHCN at $H = 128$, which we attribute to mild overfitting on the smallest dataset.

It is worth highlighting that $H = 16$ is sufficient on MIMIC-III and MIMIC-IV even though many test instances have $N > H$ in those datasets (the maximum query count $N$ is $85$ for MIMIC-III and $79$ for MIMIC-IV\@; see \Cref{tab:dataset-summary}). In this regime, each per-component covariance $\Sigma_j = U_j U_j^\top + I$ with $U_j \in \R^{N \times H}$ has off-diagonal block of rank at most $H$, which is the limitation we flag in the main text: \model{} cannot represent full-rank or near-perfect dependence when $H \ll N$\@. The flat sensitivity curves indicate that this rank ceiling is not an issue in practice, and may even be beneficial, since in reality not all variables depend on each other.

\begin{figure}[h]
    \centering
    \includegraphics[width=\linewidth]{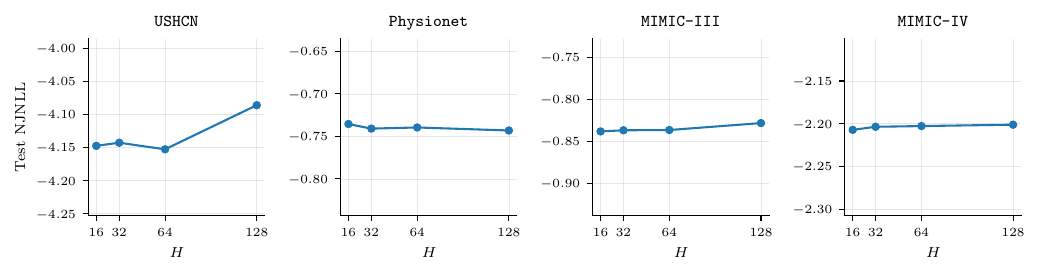}
    \caption{Sensitivity of \model{} to the correlation-network hidden dimension $H$. Test njNLL (lower is better) averaged over the five splits, with all remaining hyperparameters fixed to the values selected by the random search of \Cref{app:hyperparams}.}\label{fig:sensitivity-h}
\end{figure}

\newpage
\section{Computational Complexity}\label{app:complexity}

The dominant cost in evaluating the \model{} log-density at a single query of size $N$ is the per-component evaluation of the multivariate Gaussian density $\mathcal{N}(\mathbf{z};\mu_j,\Sigma_j)$. For each component $j \in \{1, \dots, K\}$, this requires solving a linear system in $\Sigma_j$ and computing $\log\det\Sigma_j$, both of which are typically obtained from a Cholesky factorization of $\Sigma_j$ at a cost of $\mathcal{O}(N^3)$. A naive implementation therefore scales as $\mathcal{O}(K\,N^3)$ per query.

This worst-case cost is, however, never realized for \model{}.
By construction (see \Cref{sec:model}), each covariance is built as
\begin{equation}
    \Sigma_j \;=\; D_j\,R_j\,D_j,
    \qquad
    R_j \;\propto\; D_{G_j}^{-1/2}\,(U_j U_j^\top + I)\,D_{G_j}^{-1/2},
\end{equation}
where $U_j \in \R^{N \times H}$ and $D_j$, $D_{G_j}$ are diagonal. The inner matrix $U_j U_j^\top + I$ is a rank-$H$ update of the identity, so the Sherman-Morrison-Woodbury identity and the matrix determinant lemma yield
\begin{align}
    {(U_j U_j^\top + I)}^{-1} &= I - U_j {(I + U_j^\top U_j)}^{-1} U_j^\top, \\
    \log\det(U_j U_j^\top + I) &= \log\det(I + U_j^\top U_j),
\end{align}
both of which reduce to operations on the $H \times H$ matrix $I + U_j^\top U_j$. Combined with the diagonal rescalings, evaluating $\log\mathcal{N}(\mathbf{z};\mu_j,\Sigma_j)$ for one component costs $\mathcal{O}(N H^2 + H^3)$, and the full log-density scales as $\mathcal{O}\bigl(K\,(N H^2 + H^3)\bigr)$ rather than $\mathcal{O}(K N^3)$. If we operate in a regime, where $H \ll N$, the cubic dependence on $N$ is turned into a linear one.

The remaining components of the forward pass are negligible by comparison: the encoder is $\mathcal{O}(M D^2 + N D^2)$ in the history and query lengths, the parameter MLPs are $\mathcal{O}((N + 1) D^2)$, the Newton inversion of the GMM marginal CDFs is $\mathcal{O}(N K)$ per Newton step, and the marginal flow contributes $\mathcal{O}(N L M)$ for $L$ DSF blocks of width $M$.

\paragraph{Empirical training time.}
\Cref{tab:epoch-times} reports the average wall-clock training time per epoch for the three copula models considered in our experiments, measured on a single NVIDIA A100 GPU and averaged across the five splits of each dataset. Unlike the Mix-GC ablation, evaluating the GM-C requires the numerical marginal iCDF $\cdfgmmn{n}^{-1}$ in the forward pass via Newton iteration (\Cref{prop:icdf-grad}), which adds a small overhead and is one of the reasons why \model{} is slightly slower than \model{}-Mix-GC\@. The gap is, however, modest, and \model{} remains faster than TACTiS-2 across all four datasets.

\begin{table}[h]
\centering
\caption{Average wall-clock training time per epoch (in seconds) on a single NVIDIA A100 GPU, averaged over 5 splits.}\label{tab:epoch-times}
\small
\setlength{\tabcolsep}{3mm}
\begin{tabular}{lcccc}
\toprule
\textbf{Model} & \multicolumn{1}{c}{\texttt{USHCN}} & \multicolumn{1}{c}{\texttt{Physionet}} & \multicolumn{1}{c}{\texttt{MIMIC-III}} & \multicolumn{1}{c}{\texttt{MIMIC-IV}} \\
\midrule
TACTiS-2              & 0.28 & 6.1 & 4.2 & 11.3 \\
\model{}-Mix-GC       & 0.16 & 1.9 & 3.0  & 2.4  \\
\model{} (GM-C)       & 0.22 & 2.2 & 4.0 & 3.5 \\
\bottomrule
\end{tabular}
\end{table}

\newpage
\section{Learning Curves}\label{app:learning-curves}

To complement the marginal-quality analysis in \Cref{sec:experiments}, we report the validation learning curves of the Joint-Ablation (Joint-Abl) and the decoupled \margmodel{} + \model{} setup for MIMIC-III in \Cref{fig:learning-curves}. Joint-Abl shares the architecture of \model{} but trains all weights jointly under njNLL, whereas the decoupled setup first fits \margmodel{} in isolation and then trains the copula on top of the frozen marginals. All models were trained until the validation njNLL did stopped improving for 30 epochs.

\begin{figure}[h]
    \centering
    \includegraphics[width=\textwidth]{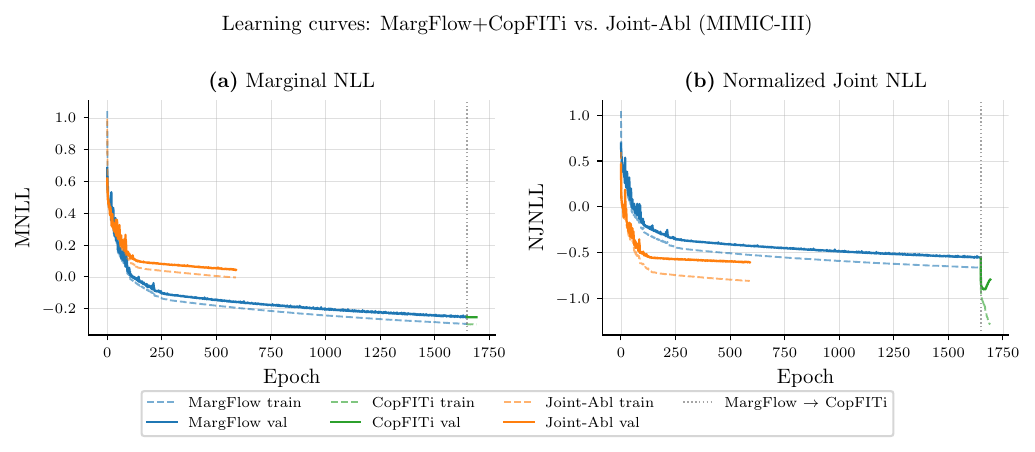}
    \caption{Validation learning curves of the Joint-Ablation (Joint-Abl) and the decoupled \margmodel{} + \model{} setup on MIMIC-III\@. Training the marginals in isolation reaches a lower marginal NLL than training all weights jointly under njNLL.}\label{fig:learning-curves}
\end{figure}

\section{On TACTiS-2's Marginalization Consistency}\label{app:tactis-breaker}

The Wasserstein gap of \Cref{tab:wd} probes only the \emph{univariate} face of marginalization consistency: whether each per-dimension marginal of the model's samples matches its \margmodel{} marginal. We single this slice out in the main paper because uniform univariate marginals are the defining property of a copula, and any deviation already violates the Sklar decomposition the model is built on. Marginalization consistency in its full form is strictly stronger: it asks that the model produce the \emph{same} marginal on \emph{any} subset of dimensions, regardless of which other dimensions are queried alongside. \model{} satisfies this stronger condition by construction (\Cref{app:mc-proof}), whereas TACTiS-2's attentional copula guarantees neither the univariate nor the multivariate version, since every query subset induces a different autoregressive factorization and attention conditioning. We isolate the multivariate failure on a controlled 3-d toy distribution below: TACTiS-2's bivariate marginal queried directly disagrees visibly with the same bivariate marginal obtained by marginalizing its 3-d joint, whereas \model{}'s two routes coincide up to sampling noise.

We complement the validity-gap measurements of \Cref{tab:wd} with a small but diagnostic toy experiment that isolates the source of TACTiS-2's marginalization gap from any confounder in the IMTS setting.

\paragraph{Setup.}
We construct a 3-dimensional toy distribution with two symmetric Gaussian clusters indexed by a quasi-discrete cluster variable $y_2$. Cluster A is centered at $(y_1, y_2, y_3) \approx (-2.5, -2, +2.5)$, and cluster B at $(y_1, y_2, y_3) \approx (+2.5, +2, -2.5)$. Knowing $y_2$ localizes $y_1$, so the true 2-d marginal $p(y_1, y_2)$ is unimodal per cluster, while the 2-d marginal $p(y_1, y_3)$ is a strongly bimodal anti-diagonal mixture. We train both models on the full 3-d distribution. Both models share the same pre-trained 1-d marginals $p(y_i)$, so the only component that differs between them is the copula. Any disagreement in the predicted joints is therefore attributable to the copula alone.

For each model, we obtain the 2-d marginals $p(y_1, y_2)$ by sampling in two different ways:
\begin{enumerate}
\item \emph{Direct:} query only the dimensions $(y_1, y_2)$, with the third dimension absent from the query.
\item \emph{Marginalized:} query the full 3-d distribution $(y_1, y_2, y_3)$.
\end{enumerate}
A marginalization consistent copula must produce the same distribution under both routes. We quantify the gap with the Wasserstein distance (WD) between the two sample sets, indicated by the arrows between rows~0 and~1 of \Cref{fig:3d-toy}.

\begin{figure}
    \includegraphics[width=\textwidth]{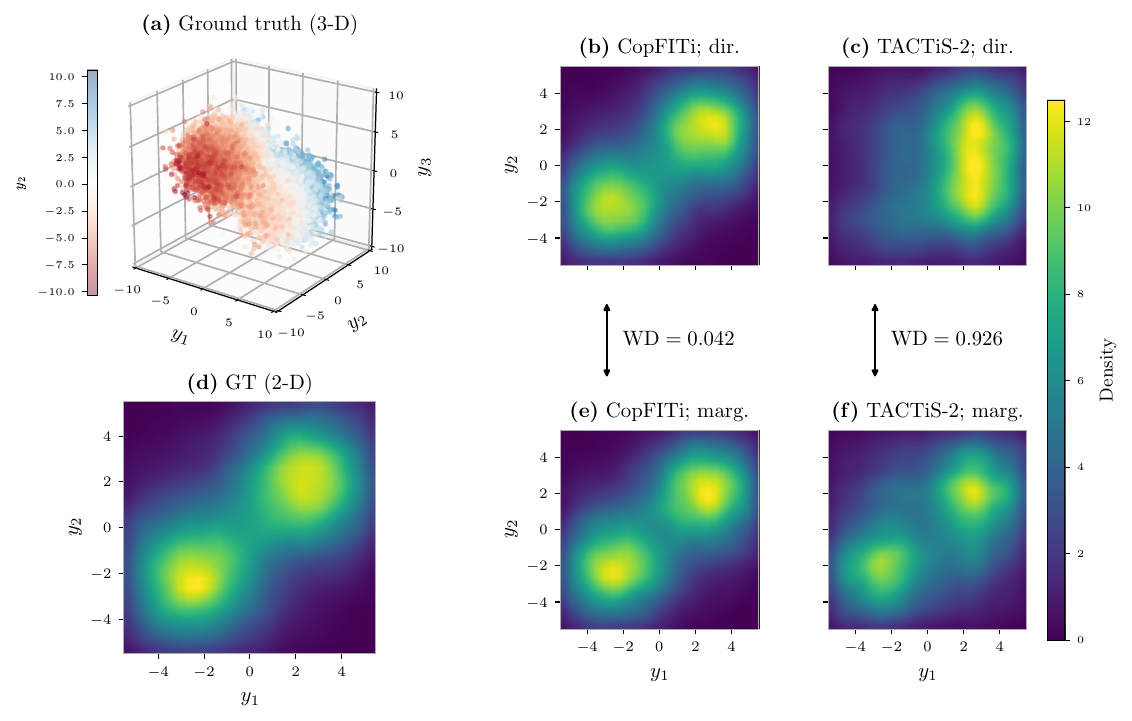}
    \caption{The Attentional Copula fails to be marginalization consistent. (a) shows the ground truth 3-d distribution. Panels (b)~and~(e) show \model{}'s direct and marginalized views of $p(y_1, y_2)$, which match by construction. Panels (c)~and~(f) show TACTiS-2's direct and marginalized views, which disagree visibly; the arrows report the corresponding WD.}\label{fig:3d-toy}
\end{figure}

\paragraph{Why TACTiS-2 fails.}
The attentional copula factorizes the joint as an autoregressive product of conditional CDFs over the queried dimensions, with attention conditioned on the specific index set $S$ being modeled,
\begin{equation}
p_\theta(\mathbf{y}_S) \;=\; \prod_{i \in S} p_\theta\bigl(y_i \,\big|\, y_{<i}, S\bigr).
\end{equation}
Both the autoregressive ordering and the attention weights depend on $S$, so each query set instantiates a distinct factorization of the same network. During training on the full 3-d distribution the model only ever observes $S = \{1, 2, 3\}$, so the only factors that receive a gradient signal are those induced by that query, namely $p_\theta(y_3)$, $p_\theta(y_2 \mid y_3)$, and $p_\theta(y_1 \mid y_2, y_3)$. Evaluating the bivariate marginal under the direct route asks the same network for an entirely different set of factors, $p_\theta(y_2)$ and $p_\theta(y_1 \mid y_2)$, neither of which is ever instantiated during training and neither of which is constrained by any loss. The attentional copula is therefore free to map this out-of-distribution conditioning to an arbitrary distribution, and no architectural mechanism ties its output back to the marginal of the trained 3-d joint. Empirically, the marginalized view inherits the bimodal structure that the copula had to learn for $y_3$ under $S = \{1, 2, 3\}$, smearing the otherwise per-cluster unimodal $p(y_1, y_2)$, while the direct view is governed by these untrained factors; panels (c)~and~(f) consequently disagree and the WD is large.

\newpage
\section{Sampling Results}\label{app:sampling-results}

We complement the likelihood-based evaluation in the main paper with three sample-based metrics: Mean Squared Error (MSE) for point forecasts (\Cref{tab:mse}), Energy Score (ES) for multivariate distributional accuracy (\Cref{tab:energy-score}), and Continuous Ranked Probability Score (CRPS) for marginal distributional accuracy (\Cref{tab:crps}). For each test instance we draw $S = 1000$ samples $\hat{\mathbf{y}}^{(s)} \in \R^N$ from the predictive distribution conditioned on $(\mathcal{Q}, \mathcal{X})$, and compare them against the ground-truth target $\mathbf{y} \in \R^N$. All three metrics are reported as means across the test set and across the five folds.

\paragraph{MSE.}
We use the sample mean $\bar{\hat{y}}_n = \tfrac{1}{S}\sum_{s=1}^S \hat{y}^{(s)}_n$ as the point forecast and report the per-coordinate squared error averaged over query points,
\begin{equation}
    \mathrm{MSE}(\hat{\mathbf{y}}, \mathbf{y})
    \;=\;
    \frac{1}{N}\sum_{n=1}^{N} {(\bar{\hat{y}}_n - y_n)}^2.
    \label{eq:mse}
\end{equation}

\paragraph{Energy Score.}
The Energy Score~\citep{Gneiting2007.Strictly} is a strictly proper scoring rule for multivariate distributions. For a predictive distribution $\hat{F}$ and target $\mathbf{y} \in \R^N$ it is defined as
\begin{equation}
    \mathrm{ES}(\hat{F}, \mathbf{y})
    \;=\;
    \mathbb{E}_{\mathbf{Y}\sim\hat{F}}\,\|\mathbf{Y} - \mathbf{y}\|_2
    \;-\;
    \tfrac{1}{2}\,\mathbb{E}_{\mathbf{Y},\mathbf{Y}'\sim\hat{F}}\,\|\mathbf{Y} - \mathbf{Y}'\|_2,
\end{equation}
where $\mathbf{Y}, \mathbf{Y}'$ are independent draws from $\hat{F}$. Given $S$ samples, we use the standard unbiased estimator
\begin{equation}
    \widehat{\mathrm{ES}}
    \;=\;
    \frac{1}{S}\sum_{s=1}^{S} \|\hat{\mathbf{y}}^{(s)} - \mathbf{y}\|_2
    \;-\;
    \frac{1}{2\,S\,(S-1)} \sum_{s \neq s'} \|\hat{\mathbf{y}}^{(s)} - \hat{\mathbf{y}}^{(s')}\|_2.
    \label{eq:es}
\end{equation}

\paragraph{CRPS.}
The Continuous Ranked Probability Score~\citep{Gneiting2007.Strictly} is the univariate counterpart of the Energy Score. For a one-dimensional predictive CDF $\hat{F}_n$ and target $y_n \in \R$,
\begin{equation}
    \mathrm{CRPS}(\hat{F}_n, y_n)
    \;=\;
    \int_{\R} {\bigl(\hat{F}_n(z) - \mathbb{1}\{z \geq y_n\}\bigr)}^2 \, \mathrm{d}z
    \;=\;
    \mathbb{E}\,|Y_n - y_n|
    \;-\;
    \tfrac{1}{2}\,\mathbb{E}\,|Y_n - Y'_n|,
\end{equation}
with $Y_n, Y'_n \sim \hat{F}_n$ independent. We report the average across query points, estimated from the per-coordinate samples $\hat{y}^{(s)}_n$ as
\begin{equation}
    \widehat{\mathrm{CRPS}}
    \;=\;
    \frac{1}{N}\sum_{n=1}^{N}
    \left[
        \frac{1}{S}\sum_{s=1}^{S} |\hat{y}^{(s)}_n - y_n|
        \;-\;
        \frac{1}{2\,S\,(S-1)} \sum_{s \neq s'} |\hat{y}^{(s)}_n - \hat{y}^{(s')}_n|
    \right].
    \label{eq:crps}
\end{equation}
Lower values are better for all three metrics.

\begin{table}[!htp]
\centering
\caption{Comparing models w.r.t.\ \textbf{MSE} (Mean Squared Error) across datasets. Lower values indicate better performance. We present the mean and standard deviation of 5 runs. The best model is marked in \textbf{bold}.}\label{tab:mse}
\small
\setlength{\tabcolsep}{2mm}
\begin{tabular}{lrrrr}
\toprule
\textbf{Model} & \multicolumn{1}{c}{\texttt{USHCN}} & \multicolumn{1}{c}{\texttt{Physionet}} & \multicolumn{1}{c}{\texttt{MIMIC-III}} & \multicolumn{1}{c}{\texttt{MIMIC-IV}} \\
\midrule
GRU-ODE & \metric{0.410}{0.106} & \metric{0.329}{0.004} & \metric{0.479}{0.044} & \metric{0.365}{0.012} \\
NeuralFlows & \metric{0.424}{0.110} & \metric{0.331}{0.006} & \metric{0.479}{0.045} & \metric{0.374}{0.017} \\
CRU & \metric{0.290}{0.060} & \metric{0.475}{0.015} & \metric{0.725}{0.037} & \metricoom{} \\
ProFITi & \metric{0.308}{0.061} & \metric{0.305}{0.007} & \metric{0.548}{0.063} & \metric{0.389}{0.015} \\
GPR & \metric{0.597}{0.110} & \metric{0.575}{0.059} & \metric{0.862}{0.016} & \metric{0.609}{0.014} \\
MOSES & \metric{0.411}{0.099} & \metric{0.307}{0.006} & \metric{0.517}{0.057} & \metric{0.342}{0.028} \\
CircuITS & \metric{0.306}{0.030} & \metric{0.292}{0.000} & \metric{0.475}{0.050} & \metric{0.285}{0.001} \\
\midrule
\margmodel{} & \metric{0.389}{0.154} & \metric{0.310}{0.016} & \metric{0.480}{0.047} & \metric{0.278}{0.003} \\
\bottomrule
\end{tabular}
\end{table}

\begin{table}[!htp]
\centering
\caption{Comparing models w.r.t.\ \textbf{Energy Score} (multivariate) across datasets. Lower values indicate better performance. We present the mean and standard deviation of 5 runs. The best model is marked in \textbf{bold}.}\label{tab:energy-score}
\small
\setlength{\tabcolsep}{2mm}
\begin{tabular}{lrrrr}
\toprule
\textbf{Model} & \multicolumn{1}{c}{\texttt{USHCN}} & \multicolumn{1}{c}{\texttt{Physionet}} & \multicolumn{1}{c}{\texttt{MIMIC-III}} & \multicolumn{1}{c}{\texttt{MIMIC-IV}} \\
\midrule
NeuralFlows & \metric{0.661}{0.059} & \metric{1.691}{0.001} & \metric{1.381}{0.033} & \metric{0.982}{0.009} \\
ProFITi & \metric{0.452}{0.044} & \metric{0.879}{0.303} & \metric{1.606}{0.168} & \metric{0.808}{0.003} \\
MOSES & \metric{0.552}{0.044} & \metric{1.599}{0.013} & \metric{1.353}{0.033} & \metric{0.906}{0.029} \\
CircuITS & \metric{0.468}{0.020} & \metric{1.613}{0.000} & \metric{1.489}{0.040} & \metric{0.927}{0.050} \\
TACTiS-2 & \metric{0.477}{0.052} & \metric{1.613}{0.018} & \metric{1.305}{0.031} & \metric{0.825}{0.007} \\
\midrule
\margmodel{} & \metric{0.483}{0.054} & \metric{1.606}{0.012} & \metric{1.349}{0.033} & \metric{0.826}{0.005} \\
\model{} & \metric{0.482}{0.054} & \metric{1.593}{0.010} & \metric{1.340}{0.033} & \metric{0.826}{0.005} \\
\bottomrule
\end{tabular}
\end{table}

\begin{table}[!htp]
\centering
\caption{Comparing models w.r.t.\ \textbf{CRPS} (Continuous Ranked Probability Score) on marginals across datasets. Lower values indicate better performance. We present the mean and standard deviation of 5 runs. The best model is marked in \textbf{bold}.}\label{tab:crps}
\small
\setlength{\tabcolsep}{2mm}
\begin{tabular}{lrrrr}
\toprule
\textbf{Model} & \multicolumn{1}{c}{\texttt{USHCN}} & \multicolumn{1}{c}{\texttt{Physionet}} & \multicolumn{1}{c}{\texttt{MIMIC-III}} & \multicolumn{1}{c}{\texttt{MIMIC-IV}} \\
\midrule
NeuralFlows & \metric{0.306}{0.028} & \metric{0.277}{0.003} & \metric{0.308}{0.004} & \metric{0.281}{0.004} \\
ProFITi & \metric{0.182}{0.007} & \metric{0.271}{0.003} & \metric{0.319}{0.003} & \metric{0.279}{0.012} \\
MOSES & \metric{0.220}{0.019} & \metric{0.260}{0.002} & \metric{0.296}{0.005} & \metric{0.245}{0.010} \\
CircuITS & \metric{0.182}{0.010} & \metric{0.252}{0.001} & \metric{0.286}{0.010} & \metric{0.221}{0.001} \\
\midrule
\margmodel{} & \metric{0.187}{0.022} & \metric{0.251}{0.001} & \metric{0.289}{0.005} & \metric{0.217}{0.001} \\
\bottomrule
\end{tabular}
\end{table}

\newpage
\section{Broader Impacts}\label{app:broader-impacts}

\model{} is a methodological contribution to probabilistic forecasting of irregular multivariate time series, evaluated on standard public benchmarks. We do not propose a deployed system, and the released artifacts are model code and training scripts rather than pre-trained generative models, scraped corpora, or systems intended for direct end-user use.

\paragraph{Potential positive impacts.}
Three of the four benchmarks used in our evaluation (\texttt{PhysioNet-2012}, \texttt{MIMIC-III}, and \texttt{MIMIC-IV}) are clinical, and one (\texttt{USHCN}) is environmental. In such settings, well-calibrated probabilistic forecasts with reliable uncertainty estimates are arguably more useful than point predictions: they enable downstream decision-making to incorporate forecast uncertainty rather than treating model outputs as ground truth. \model{}'s marginalization-consistency guarantee additionally ensures that predictions over different subsets of query points cannot contradict one another, which is a basic prerequisite for any model whose outputs may be inspected at multiple resolutions or by multiple downstream consumers.

\paragraph{Potential negative impacts.}
The general risks of probabilistic forecasting apply: forecasts may be over-trusted, miscalibrated under distribution shift, or used in decision pipelines for which the original training distribution is no longer representative. This is particularly relevant in clinical settings, where unwarranted confidence in model output can have direct consequences for patients. We emphasize that \model{} is evaluated on retrospective benchmark data and is not a clinical decision-support tool, and that any deployment in a high-stakes domain would require dataset-specific validation, calibration analysis, and appropriate human oversight that go beyond the scope of this work.

We are not aware of dual-use concerns or specific misuse scenarios beyond those generic to probabilistic time-series modeling.

\newpage
\section*{NeurIPS Paper Checklist}


\begin{enumerate}

\item {\bf Claims}
    \item[] Question: Do the main claims made in the abstract and introduction accurately reflect the paper's contributions and scope?
    \item[] Answer: \answerYes{}
    \item[] Justification: The two claims of the abstract, that \model{} is marginalization consistent by construction and that it establishes a new state of the art in joint IMTS density modeling, are formalized and supported in \Cref{sec:model} (with proof in \Cref{app:mc-proof}) and in \Cref{sec:experiments}, respectively.
    \item[] Guidelines:
    \begin{itemize}
        \item The answer \answerNA{} means that the abstract and introduction do not include the claims made in the paper.
        \item The abstract and/or introduction should clearly state the claims made, including the contributions made in the paper and important assumptions and limitations. A \answerNo{} or \answerNA{} answer to this question will not be perceived well by the reviewers. 
        \item The claims made should match theoretical and experimental results, and reflect how much the results can be expected to generalize to other settings. 
        \item It is fine to include aspirational goals as motivation as long as it is clear that these goals are not attained by the paper. 
    \end{itemize}

\item {\bf Limitations}
    \item[] Question: Does the paper discuss the limitations of the work performed by the authors?
    \item[] Answer: \answerYes{}
    \item[] Justification: A dedicated Limitations paragraph at the end of \Cref{sec:model} discusses the rank-$H$ structure of the per-component covariances and the constraint that the mixture weights $\pi$ depend only on the observation history $\mathcal{X}$, which is required by our marginalization-consistency guarantee.
    \item[] Guidelines:
    \begin{itemize}
        \item The answer \answerNA{} means that the paper has no limitation while the answer \answerNo{} means that the paper has limitations, but those are not discussed in the paper. 
        \item The authors are encouraged to create a separate ``Limitations'' section in their paper.
        \item The paper should point out any strong assumptions and how robust the results are to violations of these assumptions (e.g., independence assumptions, noiseless settings, model well-specification, asymptotic approximations only holding locally). The authors should reflect on how these assumptions might be violated in practice and what the implications would be.
        \item The authors should reflect on the scope of the claims made, e.g., if the approach was only tested on a few datasets or with a few runs. In general, empirical results often depend on implicit assumptions, which should be articulated.
        \item The authors should reflect on the factors that influence the performance of the approach. For example, a facial recognition algorithm may perform poorly when image resolution is low or images are taken in low lighting. Or a speech-to-text system might not be used reliably to provide closed captions for online lectures because it fails to handle technical jargon.
        \item The authors should discuss the computational efficiency of the proposed algorithms and how they scale with dataset size.
        \item If applicable, the authors should discuss possible limitations of their approach to address problems of privacy and fairness.
        \item While the authors might fear that complete honesty about limitations might be used by reviewers as grounds for rejection, a worse outcome might be that reviewers discover limitations that aren't acknowledged in the paper. The authors should use their best judgment and recognize that individual actions in favor of transparency play an important role in developing norms that preserve the integrity of the community. Reviewers will be specifically instructed to not penalize honesty concerning limitations.
    \end{itemize}

\item {\bf Theory assumptions and proofs}
    \item[] Question: For each theoretical result, does the paper provide the full set of assumptions and a complete (and correct) proof?
    \item[] Answer: \answerYes{}
    \item[] Justification: The marginalization-consistency claim is stated in \Cref{sec:model} and proven in full in \Cref{app:mc-proof}. The closed-form gradients of the GMM inverse CDF (\Cref{prop:icdf-grad}) are derived step-by-step in \Cref{app:icdf-grad}.
    \item[] Guidelines:
    \begin{itemize}
        \item The answer \answerNA{} means that the paper does not include theoretical results. 
        \item All the theorems, formulas, and proofs in the paper should be numbered and cross-referenced.
        \item All assumptions should be clearly stated or referenced in the statement of any theorems.
        \item The proofs can either appear in the main paper or the supplemental material, but if they appear in the supplemental material, the authors are encouraged to provide a short proof sketch to provide intuition. 
        \item Inversely, any informal proof provided in the core of the paper should be complemented by formal proofs provided in appendix or supplemental material.
        \item Theorems and Lemmas that the proof relies upon should be properly referenced. 
    \end{itemize}

    \item {\bf Experimental result reproducibility}
    \item[] Question: Does the paper fully disclose all the information needed to reproduce the main experimental results of the paper to the extent that it affects the main claims and/or conclusions of the paper (regardless of whether the code and data are provided or not)?
    \item[] Answer: \answerYes{}
    \item[] Justification: We adopt the publicly available preprocessing, binning, and split protocol of prior work~\citep{Bilos2021.Neural,Yalavarthi2025.Probabilistic,Yalavarthi2025.Reliable,Kloetergens2026.Probabilistic} on four standard public benchmarks. The architecture is fully specified in \Cref{sec:marg,sec:model}, and hyperparameters and training details are listed in \Cref{app:exp-details}.
    \item[] Guidelines:
    \begin{itemize}
        \item The answer \answerNA{} means that the paper does not include experiments.
        \item If the paper includes experiments, a \answerNo{} answer to this question will not be perceived well by the reviewers: Making the paper reproducible is important, regardless of whether the code and data are provided or not.
        \item If the contribution is a dataset and\slash{}or model, the authors should describe the steps taken to make their results reproducible or verifiable.
        \item Depending on the contribution, reproducibility can be accomplished in various ways. For example, if the contribution is a novel architecture, describing the architecture fully might suffice, or if the contribution is a specific model and empirical evaluation, it may be necessary to either make it possible for others to replicate the model with the same dataset, or provide access to the model. In general,\ releasing code and data is often one good way to accomplish this, but reproducibility can also be provided via detailed instructions for how to replicate the results, access to a hosted model (e.g., in the case of a large language model), releasing of a model checkpoint, or other means that are appropriate to the research performed.
        \item While NeurIPS does not require releasing code, the conference does require all submissions to provide some reasonable avenue for reproducibility, which may depend on the nature of the contribution. For example
        \begin{enumerate}
            \item If the contribution is primarily a new algorithm, the paper should make it clear how to reproduce that algorithm.
            \item If the contribution is primarily a new model architecture, the paper should describe the architecture clearly and fully.
            \item If the contribution is a new model (e.g., a large language model), then there should either be a way to access this model for reproducing the results or a way to reproduce the model (e.g., with an open-source dataset or instructions for how to construct the dataset).
            \item We recognize that reproducibility may be tricky in some cases, in which case authors are welcome to describe the particular way they provide for reproducibility. In the case of closed-source models, it may be that access to the model is limited in some way (e.g., to registered users), but it should be possible for other researchers to have some path to reproducing or verifying the results.
        \end{enumerate}
    \end{itemize}

\item {\bf Open access to data and code}
    \item[] Question: Does the paper provide open access to the data and code, with sufficient instructions to faithfully reproduce the main experimental results, as described in supplemental material?
    \item[] Answer: \answerYes{}
    \item[] Justification: The four datasets are publicly available and we use the established preprocessing pipeline from prior work, cited in \Cref{sec:experiments}. An anonymized code release accompanies the submission and contains instructions to reproduce all reported results.
    \item[] Guidelines:
    \begin{itemize}
        \item The answer \answerNA{} means that paper does not include experiments requiring code.
        \item Please see the NeurIPS code and data submission guidelines (\url{https://neurips.cc/public/guides/CodeSubmissionPolicy}) for more details.
        \item While we encourage the release of code and data, we understand that this might not be possible, so \answerNo{} is an acceptable answer. Papers cannot be rejected simply for not including code, unless this is central to the contribution (e.g., for a new open-source benchmark).
        \item The instructions should contain the exact command and environment needed to run to reproduce the results. See the NeurIPS code and data submission guidelines (\url{https://neurips.cc/public/guides/CodeSubmissionPolicy}) for more details.
        \item The authors should provide instructions on data access and preparation, including how to access the raw data, preprocessed data, intermediate data, and generated data, etc.
        \item The authors should provide scripts to reproduce all experimental results for the new proposed method and baselines. If only a subset of experiments are reproducible, they should state which ones are omitted from the script and why.
        \item At submission time, to preserve anonymity, the authors should release anonymized versions (if applicable).
        \item Providing as much information as possible in supplemental material (appended to the paper) is recommended, but including URLs to data and code is permitted.
    \end{itemize}

\item {\bf Experimental setting/details}
    \item[] Question: Does the paper specify all the training and test details (e.g., data splits, hyperparameters, how they were chosen, type of optimizer) necessary to understand the results?
    \item[] Answer: \answerYes{}
    \item[] Justification: Data splits follow the established protocol cited in \Cref{sec:experiments}. Hyperparameters, optimizer settings, and training schedule for \margmodel{} and \model{} are reported in \Cref{app:exp-details}, and the TACTiS-2 adaptation is described in \Cref{app:tactis-adapt}.
    \item[] Guidelines:
    \begin{itemize}
        \item The answer \answerNA{} means that the paper does not include experiments.
        \item The experimental setting should be presented in the core of the paper to a level of detail that is necessary to appreciate the results and make sense of them.
        \item The full details can be provided either with the code, in appendix, or as supplemental material.
    \end{itemize}

\item {\bf Experiment statistical significance}
    \item[] Question: Does the paper report error bars suitably and correctly defined or other appropriate information about the statistical significance of the experiments?
    \item[] Answer: \answerYes{}
    \item[] Justification: Each model is trained and evaluated on five paired train/validation/test splits, and we report the mean and standard deviation across these splits in all result tables. Because the splits are paired across methods, we additionally report a Corrected Resampled $t$-test~\citep{Nadeau1999.Inference} in \Cref{tab:ttest}, with the interpretation discussed in \Cref{app:exp-details}.
    \item[] Guidelines:
    \begin{itemize}
        \item The answer \answerNA{} means that the paper does not include experiments.
        \item The authors should answer \answerYes{} if the results are accompanied by error bars, confidence intervals, or statistical significance tests, at least for the experiments that support the main claims of the paper.
        \item The factors of variability that the error bars are capturing should be clearly stated (for example, train/test split, initialization, random drawing of some parameter, or overall run with given experimental conditions).
        \item The method for calculating the error bars should be explained (closed form formula, call to a library function, bootstrap, etc.)
        \item The assumptions made should be given (e.g., Normally distributed errors).
        \item It should be clear whether the error bar is the standard deviation or the standard error of the mean.
        \item It is OK to report 1-sigma error bars, but one should state it. The authors should preferably report a 2-sigma error bar than state that they have a 96\% CI, if the hypothesis of Normality of errors is not verified.
        \item For asymmetric distributions, the authors should be careful not to show in tables or figures symmetric error bars that would yield results that are out of range (e.g., negative error rates).
        \item If error bars are reported in tables or plots, the authors should explain in the text how they were calculated and reference the corresponding figures or tables in the text.
    \end{itemize}

\item {\bf Experiments compute resources}
    \item[] Question: For each experiment, does the paper provide sufficient information on the computer resources (type of compute workers, memory, time of execution) needed to reproduce the experiments?
    \item[] Answer: \answerYes{}
    \item[] Justification: All experiments are run on a single NVIDIA A100 GPU\@. \Cref{app:complexity} gives a complexity analysis and reports per-epoch wall-clock training times for the three copula models in \Cref{tab:epoch-times}.
    \item[] Guidelines:
    \begin{itemize}
        \item The answer \answerNA{} means that the paper does not include experiments.
        \item The paper should indicate the type of compute workers CPU or GPU, internal cluster, or cloud provider, including relevant memory and storage.
        \item The paper should provide the amount of compute required for each of the individual experimental runs as well as estimate the total compute. 
        \item The paper should disclose whether the full research project required more compute than the experiments reported in the paper (e.g., preliminary or failed experiments that didn't make it into the paper). 
    \end{itemize}
    
\item {\bf Code of ethics}
    \item[] Question: Does the research conducted in the paper conform, in every respect, with the NeurIPS Code of Ethics \url{https://neurips.cc/public/EthicsGuidelines}?
    \item[] Answer: \answerYes{}
    \item[] Justification: We have reviewed the NeurIPS Code of Ethics and our work conforms with it in all respects. We use only publicly available, anonymized benchmark datasets and do not collect any new data.
    \item[] Guidelines:
    \begin{itemize}
        \item The answer \answerNA{} means that the authors have not reviewed the NeurIPS Code of Ethics.
        \item If the authors answer \answerNo, they should explain the special circumstances that require a deviation from the Code of Ethics.
        \item The authors should make sure to preserve anonymity (e.g., if there is a special consideration due to laws or regulations in their jurisdiction).
    \end{itemize}

\item {\bf Broader impacts}
    \item[] Question: Does the paper discuss both potential positive societal impacts and negative societal impacts of the work performed?
    \item[] Answer: \answerYes{}
    \item[] Justification: \Cref{app:broader-impacts} discusses both potential positive impacts (better-calibrated probabilistic forecasts in clinical and environmental benchmarks, marginalization consistency as a prerequisite for trustworthy multi-resolution use) and potential negative impacts (over-trust, miscalibration under distribution shift, and the general risks of probabilistic forecasting in high-stakes domains).
    \item[] Guidelines:
    \begin{itemize}
        \item The answer \answerNA{} means that there is no societal impact of the work performed.
        \item If the authors answer \answerNA{} or \answerNo, they should explain why their work has no societal impact or why the paper does not address societal impact.
        \item Examples of negative societal impacts include potential malicious or unintended uses (e.g., disinformation, generating fake profiles, surveillance), fairness considerations (e.g., deployment of technologies that could make decisions that unfairly impact specific groups), privacy considerations, and security considerations.
        \item The conference expects that many papers will be foundational research and not tied to particular applications, let alone deployments. However, if there is a direct path to any negative applications, the authors should point it out. For example, it is legitimate to point out that an improvement in the quality of generative models could be used to generate Deepfakes for disinformation. On the other hand, it is not needed to point out that a generic algorithm for optimizing neural networks could enable people to train models that generate Deepfakes faster.
        \item The authors should consider possible harms that could arise when the technology is being used as intended and functioning correctly, harms that could arise when the technology is being used as intended but gives incorrect results, and harms following from (intentional or unintentional) misuse of the technology.
        \item If there are negative societal impacts, the authors could also discuss possible mitigation strategies (e.g., gated release of models, providing defenses in addition to attacks, mechanisms for monitoring misuse, mechanisms to monitor how a system learns from feedback over time, improving the efficiency and accessibility of ML).
    \end{itemize}
    
\item {\bf Safeguards}
    \item[] Question: Does the paper describe safeguards that have been put in place for responsible release of data or models that have a high risk for misuse (e.g., pre-trained language models, image generators, or scraped datasets)?
    \item[] Answer: \answerNA{}
    \item[] Justification: The paper does not release pre-trained generative models or scraped datasets. The released artifacts are model code and training scripts for a probabilistic forecasting model on existing public benchmarks, which we judge to pose no high risk of misuse.
    \item[] Guidelines:
    \begin{itemize}
        \item The answer \answerNA{} means that the paper poses no such risks.
        \item Released models that have a high risk for misuse or dual-use should be released with necessary safeguards to allow for controlled use of the model, for example by requiring that users adhere to usage guidelines or restrictions to access the model or implementing safety filters. 
        \item Datasets that have been scraped from the Internet could pose safety risks. The authors should describe how they avoided releasing unsafe images.
        \item We recognize that providing effective safeguards is challenging, and many papers do not require this, but we encourage authors to take this into account and make a best faith effort.
    \end{itemize}

\item {\bf Licenses for existing assets}
    \item[] Question: Are the creators or original owners of assets (e.g., code, data, models), used in the paper, properly credited and are the license and terms of use explicitly mentioned and properly respected?
    \item[] Answer: \answerYes{}
    \item[] Justification: The four datasets (\texttt{USHCN}, \texttt{PhysioNet-2012}, \texttt{MIMIC-III}, \texttt{MIMIC-IV}) and all baseline methods are cited at their original sources in \Cref{sec:experiments} and used in accordance with their respective terms of use.
    \item[] Guidelines:
    \begin{itemize}
        \item The answer \answerNA{} means that the paper does not use existing assets.
        \item The authors should cite the original paper that produced the code package or dataset.
        \item The authors should state which version of the asset is used and, if possible, include a URL\@.
        \item The name of the license (e.g., CC-BY 4.0) should be included for each asset.
        \item For scraped data from a particular source (e.g., website), the copyright and terms of service of that source should be provided.
        \item If assets are released, the license, copyright information, and terms of use in the package should be provided. For popular datasets, \url{paperswithcode.com/datasets} has curated licenses for some datasets. Their licensing guide can help determine the license of a dataset.
        \item For existing datasets that are re-packaged, both the original license and the license of the derived asset (if it has changed) should be provided.
        \item If this information is not available online, the authors are encouraged to reach out to the asset's creators.
    \end{itemize}

\item {\bf New assets}
    \item[] Question: Are new assets introduced in the paper well documented and is the documentation provided alongside the assets?
    \item[] Answer: \answerYes{}
    \item[] Justification: The accompanying anonymized code release contains documentation, configuration files, and scripts to reproduce the reported results.
    \item[] Guidelines:
    \begin{itemize}
        \item The answer \answerNA{} means that the paper does not release new assets.
        \item Researchers should communicate the details of the dataset\slash{}code\slash{}model as part of their submissions via structured templates. This includes details about training, license, limitations, etc.
        \item The paper should discuss whether and how consent was obtained from people whose asset is used.
        \item At submission time, remember to anonymize your assets (if applicable). You can either create an anonymized URL or include an anonymized zip file.
    \end{itemize}

\item {\bf Crowdsourcing and research with human subjects}
    \item[] Question: For crowdsourcing experiments and research with human subjects, does the paper include the full text of instructions given to participants and screenshots, if applicable, as well as details about compensation (if any)?
    \item[] Answer: \answerNA{}
    \item[] Justification: The paper does not involve crowdsourcing or research with human subjects. We use only publicly available, previously released benchmark datasets.
    \item[] Guidelines:
    \begin{itemize}
        \item The answer \answerNA{} means that the paper does not involve crowdsourcing nor research with human subjects.
        \item Including this information in the supplemental material is fine, but if the main contribution of the paper involves human subjects, then as much detail as possible should be included in the main paper. 
        \item According to the NeurIPS Code of Ethics, workers involved in data collection, curation, or other labor should be paid at least the minimum wage in the country of the data collector. 
    \end{itemize}

\item {\bf Institutional review board (IRB) approvals or equivalent for research with human subjects}
    \item[] Question: Does the paper describe potential risks incurred by study participants, whether such risks were disclosed to the subjects, and whether Institutional Review Board (IRB) approvals (or an equivalent approval/review based on the requirements of your country or institution) were obtained?
    \item[] Answer: \answerNA{}
    \item[] Justification: The paper does not involve research with human subjects. The medical benchmarks used here were collected and de-identified by their original creators under their own ethical approvals.
    \item[] Guidelines:
    \begin{itemize}
        \item The answer \answerNA{} means that the paper does not involve crowdsourcing nor research with human subjects.
        \item Depending on the country in which research is conducted, IRB approval (or equivalent) may be required for any human subjects research. If you obtained IRB approval, you should clearly state this in the paper. 
        \item We recognize that the procedures for this may vary significantly between institutions and locations, and we expect authors to adhere to the NeurIPS Code of Ethics and the guidelines for their institution. 
        \item For initial submissions, do not include any information that would break anonymity (if applicable), such as the institution conducting the review.
    \end{itemize}

\item {\bf Declaration of LLM usage}
    \item[] Question: Does the paper describe the usage of LLMs if it is an important, original, or non-standard component of the core methods in this research? Note that if the LLM is used only for writing, editing, or formatting purposes and does \emph{not} impact the core methodology, scientific rigor, or originality of the research, declaration is not required.
    \item[] Answer: \answerNA{}
    \item[] Justification: LLMs are not part of the core methodology of this paper.
    \item[] Guidelines:
    \begin{itemize}
        \item The answer \answerNA{} means that the core method development in this research does not involve LLMs as any important, original, or non-standard components.
        \item Please refer to our LLM policy in the NeurIPS handbook for what should or should not be described.
    \end{itemize}

\end{enumerate}

\end{document}